\def\tdotoggle{0} %
\newcommand{\secref}[1]{\hyperref[#1]{\S\ref{#1}}}
\newcommand{\Eqref}[1]{\hyperref[#1]{Eq. (\ref{#1})}}
\providecommand{\tdotoggle}{1}
\newcommand{\mytodo}[1]{\ifnum\tdotoggle=1{#1}\fi}
\newcommand{\info}[1]{\mytodo{\todo[linecolor=og,backgroundcolor=og!25,bordercolor=og]{#1}}}
\newcommand{\improve}[1]{\mytodo{\todo[linecolor=Gred,backgroundcolor=Gred!25,bordercolor=Gred]{#1}}}
\newcommand{\tableoftodos}{\ifnum\tdotoggle=1 \listoftodos[Comments/To Do's] \fi}
\definecolor{pw}{HTML}{7977B8}
\definecolor{og}{HTML}{3C8031}
\definecolor{maroon}{HTML}{AF3235}
\definecolor{yo}{HTML}{FAA21A}
\definecolor{mybrick}{RGB}{180,14,15}
\definecolor{Gred}{RGB}{219, 50, 54}
\definecolor{Ggreen}{RGB}{60, 186, 84}
\definecolor{Gblue}{RGB}{72, 133, 237}
\definecolor{Gyellow}{RGB}{247, 178, 16}
\definecolor{ToCgreen}{RGB}{0, 128, 0}
\definecolor{myGold}{RGB}{231,141,20}
\definecolor{myBlue}{rgb}{0.19,0.41,.65}
\definecolor{myPurple}{RGB}{175,0,124}
\definecolor{WolframOne}{rgb}{0.368417, 0.506779, 0.709798}
\definecolor{WolframTwo}{rgb}{0.880722, 0.611041, 0.142051}
\definecolor{WolframThree}{rgb}{0.560181, 0.691569, 0.194885}
\definecolor{WolframFour}{rgb}{0.922526, 0.385626, 0.209179}
\newcommand{\eps}{\varepsilon}
\newcommand{\indicator}{\mathds{1}}
\DeclareMathOperator{\Ex}{\mathbb{E}}
\DeclareMathOperator{\argmin}{\mathrm{argmin}}
\DeclareMathOperator{\Supp}{\mathrm{Supp}}
\newcommand{\what}[1]{\widehat{#1}}
\DeclareMathOperator{\sign}{sign}
\declaretheorem[name=Theorem]{theorem}
\declaretheorem[name=Fact,sibling=theorem]{fact}
\declaretheorem[name=Proposition,sibling=theorem]{proposition}
\declaretheorem[name=Setting]{setting}
\declaretheorem[name=Definition,sibling=theorem]{definition}
\newcommand{\set}[1]{\left \{ #1 \right \}}
\newcommand{\bit}{\{0,1\}}
\newcommand{\sbit}{\{-1,1\}}
\newcommand{\inabs}[1]{\left | #1 \right |}
\newcommand{\inparen}[1]{\left ( #1 \right )}
\newcommand{\insquare}[1]{\left [ #1 \right ]}
\newcommand{\inangle}[1]{\left \langle #1 \right \rangle}
\newcommand{\infork}[1]{\left \{ \begin{matrix} #1 \end{matrix} \right .}
\newcommand{\bbR}{\mathbb{R}}
\newcommand{\calD}{\mathcal{D}}
\newcommand{\calE}{\mathcal{E}}
\newcommand{\calI}{\mathcal{I}}
\newcommand{\calL}{\mathcal{L}}
\newcommand{\calS}{\mathcal{S}}
\newcommand{\calW}{\mathcal{W}}
\newcommand{\calX}{\mathcal{X}}
\newcommand{\calY}{\mathcal{Y}}
\newcommand{\calZ}{\mathcal{Z}}
\newcommand{\Etr}{\calE_{\mathrm{tr}}}
\newcommand{\Thetatr}{\Theta_{\mathrm{tr}}}
\newcommand{\logloss}{\ell_{\mathrm{log}}}
\newcommand{\sqloss}{\ell_{\mathrm{sq}}}
\newcommand{\Wlin}{\calW_{\mathrm{lin}}}
\newcommand{\Wscalar}{\calS} %
\newcommand{\Iscalar}{\calI_{\Wscalar}}
\newcommand{\IRM}{\mathsf{IRM}}
\newcommand{\IRMvone}{\mathsf{IRMv1}}
\newcommand{\ERM}{\mathsf{ERM}}
\newcommand{\Rad}{\mathrm{Rad}}
\newcommand{\ColoredMNIST}{\texttt{Colored-MNIST}\xspace}
\newcommand{\pred}{f} %
\let\citep\parencite
\let\citet\textcite
\let\Citet\Textcite
\begin{document}

\ifnum\tdotoggle=1
\listoftodos
\setcounter{page}{0}
\fi

\runningauthor{Pritish Kamath, Akilesh Tangella, Danica J.\ Sutherland, Nathan Srebro}

\twocolumn[

\aistatstitle{Does Invariant Risk Minimization Capture Invariance?}

\aistatsauthor{%
    Pritish Kamath \\ {\small \tt pritish@ttic.edu} \And %
    Akilesh Tangella \\ {\small \tt akilesh@ttic.edu} \And %
    Danica J.\ Sutherland \\ {\small \tt dsuth@cs.ubc.ca}\And %
    Nathan Srebro \\ {\small \tt nati@ttic.edu} %
}

\aistatsaddress{Toyota Technological Institute at Chicago} ]

\begin{abstract}
We show that the Invariant Risk Minimization (IRM) formulation of Arjovsky et al. (2019) can fail to capture ``natural'' invariances, at least when used in its practical ``linear'' form, and even on very simple problems which directly follow the motivating examples for IRM. This can lead to worse generalization on new environments, even when compared to unconstrained ERM. The issue stems from a significant gap between the linear variant (as in their concrete method IRMv1) and the full non-linear IRM formulation.
Additionally, even when capturing the ``right'' invariances, we show that it is possible for IRM to learn a sub-optimal predictor, due to the loss function not being invariant across environments.
The issues arise even when measuring invariance on the population distributions, but are exacerbated by the fact that IRM is extremely fragile to sampling.
\end{abstract}

\section{INTRODUCTION}
Machine learning systems tend to seize on spurious correlations present in the training data,
and so when presented with out-of-distribution inputs,
they can fail spectacularly.
For instance, in the spirit of \citet{beery18cowscamels} and \citet{arjovsky19invariant}, consider a deep neural network trained to classify images %
as containing a cow or a camel.
Suppose that most pictures of cows in the training set are taken in (green) grassy pastures, and those of camels are mostly in (brown) deserts.
Then, the neural network is likely to strongly use background color for its predictions -- after all, it is a very easy signal to use, and it barely hurts the loss.
Such a network, however, will perform poorly at recognizing cows on a beach.
How, then, can we design a machine learning system to identify key features of interest -- face, shape, body color, etc., of animals -- and ignore spurious ones, like the background color?

Standard machine learning algorithms assume a training set independently sampled from a \emph{single} distribution, and seek good performance only on new samples from the same distribution.
There has been much work on models that can adapt to a new distribution given a small number of labeled samples \citep[see e.g. the survey of][]{domain-adaptation-theory},
or models that are robust to \emph{nearby} distributions \citep[see e.g. the survey of][]{dro-survey}.
Ideally though, we would hope for a model that can handle even \emph{large} changes in distribution, \emph{without} the need for labeled target samples.

In reality, our training data usually does \emph{not} actually come from a single homogeneous source: we may have collected it from different users, on different continents, in different years.
We thus may be able to tell which correlations are stable across environments (and hence are more likely to be the ``true'' correlations we seek),
and which behave differently in different environments (and are more likely to be spurious).

One approach, then, is to attempt to learn an \emph{invariant predictor}
\citep[e.g.][]{peters2015causal,heinzedeml18invariant,mateo18invariant}.
We might, for instance, assume that for the \emph{causally relevant} subset $S$ of the input variables $X$, the conditional distribution $\set{Y | X_S}$ is invariant across data sampled from different environments.
This usually requires assuming a meaningful causal graph relating the observed variables.
When classifying cows vs.\ camels based on image pixels,
such assumptions are not likely to hold on the input data,
though they could potentially apply to the latent variables underlying these images.

The \emph{Invariant Risk Minimization} ($\IRM$) framework of \citet{arjovsky19invariant}
tries to find a data representation $\varphi$  which discards the spurious correlations, leaving only the ``real'' signal,
by enforcing that the predictor $w$ acting on that representation
is simultaneously optimal in each environment given $\varphi$.
For instance, in the cows-vs-camels problem, $\varphi$ might remove the background color.
Since this gives a challenging bi-level optimization problem, \citeauthor{arjovsky19invariant} propose a relaxed version, $\IRMvone$,
which assumes $w$ is a linear predictor.
(We will overview the framework in \cref{sec:irm}.)
For a thorough overview of how this approach fits into the literature on out-of-domain generalization,
see the discussion by \citeauthor{arjovsky19invariant} and in particular Appendix A of \citet{gulrajani20lost}.
Subsequent work has provided new approaches for training in the $\IRM$ paradigm \citep[e.g.][]{ahuja20IRMgames,unshuffling}
and applications in domains such as interpretable language processing models \citep{chang20rationalization}.

Despite much initial promise, however, many key questions remain about the $\IRM$ framework:
how well does $\IRMvone$ approximate the exact version of the framework in general settings?
Do invariant predictors always generalize well on unseen environments?
When does a set of training environments allow us to find representations invariant across a broader set of target environments?
How does the framework and/or the algorithm behave on finite samples?

\paragraph{Our Contributions}
We advance the understanding of several core questions about the $\IRM$ framework.

In \cref{sec:bit-example}, we study a simple setting of environments over $\calX = \bit^2$, abstracting the \ColoredMNIST problem studied by \citet{arjovsky19invariant}.
We show that sometimes $\IRM$ with linear $w$ can provably fail to find a ``truly'' invariant predictor, even when solved with respect to the population loss, and even if we provide \emph{infinitely} many training environments.
In fact, it finds a predictor that is even {\em worse} on out-of-distribution environments than unrestricted $\ERM$.
This issue persists in the $\IRMvone$ implementation.

In \cref{sec:invariance-sufficient}, we note the population loss of even ``truly'' invariant predictors need not be invariant. We give a simple setting where $\IRM$, which minimizes loss over training environments, prefers an invariant predictor with worse out-of-distribution generalization.
In \cref{sec:invariance-generalize}, we study when it is possible to identify invariant predictors for a broad class of environments on the basis of a small range of training environments.
Although this is generally impossible, we show conditions on the environments under which it is possible.

Finally, in \cref{sec:sampling-issues}, we point out issues that arise when using the $\IRM$ paradigm over the distributions of empirical samples rather than the population distributions. Here, even invariant predictors (over the population distributions) might not be invariant when considered over the distribution of empirical samples.

\section{INVARIANT RISK MINIMIZATION}\label{sec:irm}
We now describe the $\IRM$ paradigm of \citet{arjovsky19invariant}.
We have a set of {\em environments} $\calE$,
where each environment $e \in \calE$ corresponds to a distribution $\calD_e$ over $\calX \times \calY$,
with $\calX$ being the space of inputs and $\calY$ that of outputs.
Our goal is to find a predictor $\pred : \calX \to \what{\calY}$;
we measure the quality of a prediction
with a loss function $\ell : \what{\calY} \times \calY \to \bbR_{\ge 0}$,
and the quality of a predictor by its \emph{population loss} on environment $e \in \calE$, given by
$\calL_e(\pred) := \Ex_{(x,y)\sim\calD_e} \ell(\pred(x),y)$.
In this paper, we mainly focus on the following special case.
\begin{setting}\label{setting:binary}
$\calY \subseteq \bbR$, $\what{\calY} = \bbR$,
and $\ell$ is either
the square loss $\sqloss(\what{y},y) := \frac{1}{2} (\what{y}-y)^2$,
or,
when $\calY = \sbit$ (corresponding to binary classification),
the logistic loss
$\logloss(\what{y},y) := \log(1+\exp(-\what{y}y))$.
\end{setting}

Given access to samples from some training environments $\Etr \subseteq \calE$, our aim to learn a predictor $\pred$ that minimizes the ``out-of-distribution'' loss over all environments in $\calE$, namely
\begin{equation}
\calL_{\calE}(\pred) := \sup_{e \in \calE} \calL_e(\pred)\tag{OOD-Gen}\label{eqn:ood-gen}
.\end{equation}

\subsection{Notions of Invariance}
The $\IRM$ paradigm attempts to solve this problem by learning 
an \emph{invariant} representation $\varphi : \calX \to \calZ$.
For instance, $\varphi$ might ``throw away'' the spurious background color in the cows-vs.-camels example, if $e_1 \in \calE$ is images from Ireland (where most cow images have grassy backgrounds), and $e_2 \in \calE$ is from India (with many more images of cows on city streets).
The formal definition of \emph{invariant} is as follows.

\begin{definition}[Definition 3 of \cite{arjovsky19invariant}]\label{def:all-invariant}
A representation\footnote{We always assume $\varphi$ and $w$ are measurable. For further subtleties with \cref{def:all-invariant,def:space-invariant}, see \cref{appendix:def-subtleties}.} $\varphi : \calX \to \calZ$ is {\em invariant} over a set of environments $\calE$ if there exists a $w : \calZ \to \what{\calY}$ such that $w$ is simultaneously optimal on $\varphi$ for all environments $e \in \calE$, that is, $w \in \argmin_{\overline{w} : \calZ \to \what{\calY}} \calL_e(\overline{w} \circ \varphi)$.%
\end{definition}
This definition is motivated by the following observation of  \citet{arjovsky19invariant},
which corresponds more closely to an intuitive definition of invariance.
\begin{restatable}{observation}{obsexpectations}\label{obs:expectations}
Under \cref{setting:binary},
a representation $\varphi : \calX \to \calZ$ is invariant over $\calE$ if and only if for all $e_1, e_2 \in \calE$, it holds that
\[
    \Ex_{\calD_{e_1}}[Y \mid \varphi(X) = z] = \Ex_{\calD_{e_2}}[Y \mid \varphi(X) = z]
\]
for all $z \in \calZ_\varphi^{e_1} \cap \calZ_\varphi^{e_2}$,
where $\calZ_\varphi^e$ are the representations from $\calD_e$, $\calZ_\varphi^e \coloneqq \{ \varphi(X) \mid (X, Y) \in \Supp(\calD_{e}) \} $.
\end{restatable}%
We give a proof in \cref{appendix:irm-proofs} for completeness.

Crucially, \cref{def:all-invariant} requires that $\varphi$ and $w$ are unrestricted in the space of {\em all} (measurable) functions.
However, we wish to learn $\varphi$ and $w$ with access to only (finite) training sets $S_e$ sampled from $\calD_e$,
for only a small subset of training environments $\Etr \subseteq \calE$.
For this to be feasible,
it is natural to add a restriction that $\varphi \in \Phi$ and $w \in \calW$,
for suitable classes $\Phi$ of functions mapping $\calX \to \calZ$ 
and $\calW$ of functions mapping $\calZ \to \what\calY$.
Any choice of function classes $(\Phi, \calW)$ defines a class of ``invariant'' predictors for a set of environments $\calE$.

\begin{definition}\label{def:space-invariant}
For any $\Phi$, $\calW$ and loss function $\ell$, the set of invariant predictors on $\calE$, $\calI_{\Phi,\calW}^{\ell}(\calE)$,
is the set of all predictors $\pred : \calX \to \what{\calY}$
such that $\exists\, (w,\varphi) \in \calW \times \Phi$ satisfying the following:
\begin{itemize}
\item $\pred = w \circ \varphi$, and
\item for all $e \in \calE$, $w \in \argmin_{\overline{w} \in \calW} \calL_e(\overline{w} \circ \varphi)$.
\end{itemize}
For ease of notation, we will keep the loss function $\ell$ implicit. When $\Phi$ is the space of all functions $\calX \to \calZ$, we denote $\calI_{\Phi,\calW}(\calE)$ as simply $\calI_{\calW}(\calE)$.
Moreover, when $\calW$ is the space of all functions $\calZ \to \what{\calY}$,
we denote $\calI_{\calW}(\calE)$ as $\calI(\calE)$,
leaving the choice of $\calZ$ implicit.\footnote{In defining $\calI(\calE)$, the choice of $\calZ$ does not matter, as long as $\calZ$ is large enough compared to $\calX$; for instance, $\calZ = \calX$ is always a valid choice.}
\end{definition}

Because exact optimization over $\calW$ is in general difficult,
it is useful to consider some special cases.
A natural option is
{\em linear} invariant predictors, where $\calZ = \bbR^d$ and $\calW = \Wlin^d$ is the space of all linear functions on $\bbR^d$.
\Citet{arjovsky19invariant} argued that
linear predictors in fact provide no additional representation advantage over
{\em scalar} invariant predictors, the linear predictors for $d = 1$,
$\calW = \Wscalar := \Wlin^1$. In our notation, this translates to the following lemma, proved in \cref{appendix:irm-proofs}.

\begin{restatable}{lemma}{lemirmfullvslinear}\label{lem:irm-full-vs-linear}
Under \cref{setting:binary}, for all $\calE$ and $d \ge 1$,
\[ \calI(\calE) ~\subseteq~ \Iscalar(\calE) ~=~ \calI_{\Wlin^d}(\calE) .\]
\end{restatable}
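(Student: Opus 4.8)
The plan is to funnel all three classes through a single concrete property. Say that a predictor $f:\calX\to\bbR$ is \emph{self-invariant} if for every environment $e\in\calE$ and every scalar $c\in\bbR$ one has $\calL_e(f)\le\calL_e(c\cdot f)$ --- i.e.\ no rescaling of $f$ improves the loss in any environment. The crucial observation is that a self-invariant $f$ is automatically a scalar invariant predictor: take the representation to be $f$ itself, $\varphi:=f:\calX\to\bbR$, and $w:=\mathrm{id}_{\bbR}\in\Wscalar$; then $w\circ\varphi=f$ and the scalar predictors on this $\varphi$ are precisely the maps $c\cdot f$, so self-invariance says exactly that $w=\mathrm{id}_{\bbR}$ lies in $\argmin_{s\in\Wscalar}\calL_e(s\circ\varphi)$ for all $e$. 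Hence the self-invariant predictors are contained in $\Iscalar(\calE)$.

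First I would check the reverse: any $f$ in $\calI(\calE)$, in $\Iscalar(\calE)$, or in $\calI_{\Wlin^d}(\calE)$ is self-invariant. In each case $f=w\circ\varphi$ for a witness with $w$ simultaneously minimizing $\calL_e(\overline w\circ\varphi)$ over the relevant class $\calW$ --- all measurable functions, all scalar-linear functions, or $\Wlin^d$ respectively. Each of these classes is closed under scalar multiplication, so $c\cdot f=(c\,w)\circ\varphi$ is again $\overline w\circ\varphi$ with $\overline w\in\calW$, whence $\calL_e(c\cdot f)\ge\calL_e(w\circ\varphi)=\calL_e(f)$ for every $e$; that is exactly self-invariance. (Attainment of the minimum is not an issue: membership in any of these classes comes with a witness $w$ that attains it by definition.) Combined with the previous paragraph, this already yields $\calI(\calE)\subseteq\Iscalar(\calE)$ and $\calI_{\Wlin^d}(\calE)\subseteq\Iscalar(\calE)$.

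It then remains to prove $\Iscalar(\calE)\subseteq\calI_{\Wlin^d}(\calE)$. Given $f\in\Iscalar(\calE)$ --- which by the above is self-invariant --- I would embed the scalar representation into $\bbR^d$: put $\varphi(x):=(f(x),0,\dots,0)\in\bbR^d$ and let $w\in\Wlin^d$ be projection onto the first coordinate, so $w\circ\varphi=f$. For any $\overline w\in\Wlin^d$ with first coefficient $c$, we have $\overline w\circ\varphi=c\cdot f$, so self-invariance gives $\calL_e(\overline w\circ\varphi)=\calL_e(c\cdot f)\ge\calL_e(f)=\calL_e(w\circ\varphi)$ for every $e$; thus $w$ is simultaneously optimal over $\Wlin^d$ and $f\in\calI_{\Wlin^d}(\calE)$. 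Chaining the inclusions gives $\calI(\calE)\subseteq\Iscalar(\calE)=\calI_{\Wlin^d}(\calE)$, as claimed.

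I do not anticipate a real obstacle: the argument is bookkeeping with the definitions and never uses the explicit form of the losses in \cref{setting:binary}. The two points to keep straight are that $\what{\calY}=\bbR$ (this is what lets us recycle $f$ as an $\bbR$-valued representation, and also why $\calZ$ may be taken as small as $\bbR$), and the slightly counterintuitive fact that invariance is a constraint \emph{on the witness $w$}: enlarging $\calW$ adds competitors and so makes it harder for $w$ to be optimal, which is precisely why the inclusion runs $\calI(\calE)\subseteq\Iscalar(\calE)$ rather than the other way. If one prefers affine rather than homogeneous-linear last layers, the only change is to replace ``$c\cdot f$'' by affine rescalings ``$c\cdot f+b$'' throughout; the argument is identical.
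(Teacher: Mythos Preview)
Your proof is correct and follows essentially the same approach as the paper: both arguments hinge on the fact that the relevant classes $\calW$ are closed under scalar multiplication (so the identity-on-$f$ is optimal among rescalings), and both embed the scalar representation into $\bbR^d$ along a single coordinate direction. Your ``self-invariant'' abstraction is a clean way to name the intermediate property the paper uses implicitly, and you spell out the $\Iscalar(\calE)\subseteq\calI_{\Wlin^d}(\calE)$ step that the paper leaves as ``easy to see,'' but the substance is the same.
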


\subsection{Algorithms}
Armed with a notion of invariance,
we still need a way to pick an invariant predictor based on training environments $\Etr \subseteq \calE$.
\citet{arjovsky19invariant} proposed the {\em Invariant Risk Minimization} objective given by
\begin{gather*}
    \min_{\substack{\varphi : \calX \to \calZ\\w : \calZ \to \what{\calY}}}\ \  \sum_{e \in \Etr} \calL_e(w \circ \varphi)
    \\
    \mathrm{s.t.} \, \forall e \in \Etr, \;
    w \in \argmin_{\overline{w} : \calZ \to \what\calY} \calL_e(\overline{w} \circ \varphi)
,\end{gather*}
which in our notation is equivalent to
\begin{gather}\label{eqn:irm}
\tag{$\IRM$}
\min_{\pred \in \calI(\Etr)}\ \ \sum_{e \in \Etr} \calL_e(\pred).\end{gather}
We can analogously define
$\IRM_{\calW}$ to choose a predictor $\pred \in \calI_{\calW}(\Etr)$,
and $\IRM_{\Phi,\calW}$ from $\pred \in \calI_{\Phi,\calW}(\Etr)$.%

Characterizing $\calI_{\calW}(\Etr)$ is difficult in general; fortunately $\calI_{\Wlin^d}(\calE) = \Iscalar(\calE)$ affords a simple characterization.
Any predictor $\pred \in \Iscalar(\Etr)$
can be written as $\pred(x) = w_* \, \varphi_*(x)$ for a scalar $w_*$.
Without loss of generality,
we can simply absorb the scalar $w_*$ into $\varphi \coloneqq w_* \, \varphi_*$,
so that $\pred = 1 \cdot \varphi$.
In \cref{setting:binary}, where the loss function is convex and differentiable,
$\pred = 1 \cdot \varphi \in \Iscalar(\Etr) = \calI_{\Wlin^d}(\Etr)$ if and only if
\begin{equation} \label{eq:grad-0-scalar}
\tag{$\nabla_w$}
\text{for all } e \in \Etr , \quad \nabla_{w|w=1} \calL_e(w \cdot \varphi) = 0
.\end{equation}
Yet, $\IRM_{\Wscalar}$ remains a bi-level optimization problem. For practical purposes,
\citet{arjovsky19invariant} proposed to soften this hard constraint,
giving the algorithm $\IRMvone$ to approximate $\IRM_{\Wscalar}$:
\begin{equation}\label{eqn:irmv1}
\tag{$\IRMvone$}\min_{\varphi : \calX \to \bbR} \ \ \sum_{e \in \Etr} \calL_e(\varphi) + \lambda  \inabs{\nabla_{w|w=1} \calL_e(w \cdot \varphi)}^2
.\end{equation}
A natural baseline is the $\ERM$ algorithm,
which simply minimizes the loss over training environments:
\begin{gather}\label{eqn:erm}
\tag{$\ERM$}\min_{\pred : \calX \to \what{\calY}} \ \ \sum_{e \in \Etr} \calL_e(\pred).
\end{gather}
While we referred to $\IRM$, $\IRMvone$ and $\ERM$ as ``algorithms'' above, there still remain two key details that make these impractical as stated:
(i) the loss minimized refers to the {\em population loss}, to which we do not have direct access,
and (ii) we are assuming that $\varphi$ is unrestricted in the space of all functions.
\Citet{arjovsky19invariant} attempt to remedy these issues in $\IRMvone$ by
(i) replacing the population loss by the corresponding empirical loss measured over training sets, %
and (ii) by optimizing $\varphi$ over a sufficiently expressive parameterized model, such as a deep neural network, using gradient-based local search methods.

Nevertheless, as we discuss shortly, $\IRM_{\Wscalar}$ does not capture $\IRM$ even when operating on the population loss with unrestricted $\varphi$.
Unless otherwise stated, we always consider $\IRM$, $\IRM_{\Wscalar}$, $\IRMvone$ and $\ERM$ as operating over population losses.

\subsection{Related Work}
\Citet{rosenfeld20risksofirm} demonstrate an example where there exists a {\em near-optimal} solution to the $\IRMvone$ objective, that nearly matches performance of $\IRM$ on training environments, but does no better than $\ERM$ on environments that are ``far'' away from the training distributions. This example relies on environments which barely overlap, allowing the representation to simply ``memorize'' the training environments.
Indeed, \citet{ahuja20sample} argue that $\IRM$ can have an advantage over $\ERM$ only when the support of the different environment distributions have a significant overlap.
\Citet{gulrajani20lost} find empirically that with current models and data augmentation techniques, $\ERM$ achieves state-of-the-art practical performance in domain generalization.\info{see comments in \LaTeX\xspace source for some papers that were not cited.}
\Citet{nagarajan2021understanding}, meanwhile, theoretically study the behavior of $\ERM$ for domain generalization.

Note that in prior work, $\IRM_{\Wscalar}$/$\IRMvone$ and $\IRM$ are often referred to interchangeably. As we demonstrate, $\IRM_{\Wscalar}$ can behave very differently from $\IRM$, even on simple examples that motivated the $\IRM$ approach.

\section{COLORED-MNIST AND TWO-BIT ENVIRONMENTS}\label{sec:bit-example}
To illustrate the utility of the $\IRM$ approach and $\IRMvone$ in particular, \citet{arjovsky19invariant} introduced the \ColoredMNIST problem, a synthetic task derived from \texttt{MNIST} \parencite{mnist}. While \texttt{MNIST} images are grayscale, in \ColoredMNIST each image is colored either red or green in a way that correlates strongly (but spuriously) with the class label. Here $\ERM$ learns to exploit the color, and fails at test time when the direction of correlation with the color is reversed.

To understand the behavior of $\IRM_{\Wscalar}$ and $\IRMvone$ on \ColoredMNIST,
we study an abstract version based on two bits of input,
where $Y$ is the binary label to be predicted,
$X_1$ corresponds to the label of the handwritten digit (0-4 or 5-9),
and $X_2$ corresponds to the color (red or green).
We represent each environment $e$ with two parameters $\alpha_e, \beta_e \in [0,1]$.
The distribution $\calD_e$ is defined as
\begin{align}
    Y &~\gets~ \Rad(0.5),\nonumber\\
    X_1 &~\gets~ Y \cdot \Rad(\alpha_e),\label{eqn:two-bit-envs}\tag{Two-Bit-Envs}\\
    X_2 &~\gets~ Y \cdot \Rad(\beta_e)\nonumber %
,\end{align}
where $\Rad(\delta)$ is a random variable taking value $-1$ with probability $\delta$ and $+1$ with probability $1-\delta$.
For convenience, we denote an environment $e$ as $(\alpha_e, \beta_e)$.

Following the experiments with \ColoredMNIST as done by \citet{arjovsky19invariant}, we consider a set of environments $\calE_{\alpha} \coloneqq \set{(\alpha, \beta_e) : 0 < \beta_e < 1}$.
It can be shown that there only two predictors in $\calI(\calE_{\alpha})$, one being the trivial $0$-predictor, and another that depends only on $X_1$ (see proof of \cref{prop:identify-invariances} for details).

\paragraph{Motivating example of \citet{arjovsky19invariant}} Consider $\calE = \calE_{0.25}$ and $\Etr = \set{(0.25,0.1), (0.25, 0.2)}$. Focusing on the case of $\sqloss$, \eqref{eqn:erm} on $\Etr$ learns the predictor $\pred_{\ERM}$ that is (approximately) given by
\newcommand{\predictor}[6][]{
	\begin{center}{\renewcommand{\arraystretch}{1.2}
	\begin{tabular}{|l!{\vrule width 1.3pt}c|c|}
	\hline
	#2 & $X_2=1$ & $X_2=-1$\\
	\noalign{\hrule height 1.3pt}
	$X_1 = 1$ & #3 & #4\\
	\hline
	$X_1 = -1$ & #5 & #6\\
	\hline
	\end{tabular}} #1
	\end{center}
}
\predictor[;]{$\pred_{\ERM}$}{$0.8889$}{$-0.3077$}{$0.3077$}{$-0.8889$}
the prediction clearly depends on $X_2$ as well as $X_1$.
On each environment in $\Etr$, the signal from $X_2$ is stronger than that from $X_1$,
and so the binary predictor here can be summarized as
$\sign(\pred_\ERM(X)) = \sign(X_2)$.
On the other hand, \eqref{eqn:irm} chooses the predictor $\pred_{\IRM}$ %
\predictor[,]{$\pred_{\IRM}$}{$0.5$}{$0.5$}{$-0.5$}{$-0.5$}
whose binary behavior is $\sign(\pred_\IRM(X)) = \sign(X_1)$.

On $e \in \Etr$,
$\pred_\ERM$ achieves a lower loss than $\pred_\IRM$,
since it is using the more powerful signal $X_2$.
But, if we evaluate the ability of these predictors to generalize far out of distribution
to a case where the (spurious) correlation of $X_2$ has flipped entirely,
$e = (0.25, 0.9)$,
$\pred_\ERM$ will give the wrong (binary) prediction 90\% of the time,
and get square loss $\calL_e(\pred_\ERM) = 0.985$.
This is far worse than $\pred_\IRM$,
which at $\calL_e(\pred_\IRM) = 0.375$ has not suffered at all compared to $\Etr$. It is even worse than the trivial $0$-predictor, $\calL_e(\pred_0) = 0.5$.

It turns out that $\IRM_{\Wscalar}$ also learns the predictor $\pred_{\IRM}$ here, demonstrating the utility of this relaxation of $\IRM$. This raises a natural question:

{\em Does $\IRM_{\Wscalar}$ always learn the same predictor as $\IRM$?}

\citet[Section 4.1]{arjovsky19invariant} considered a specialized {\em linear} family of environments, where they proved that indeed $\IRM_{\Wscalar}$ learns an invariant predictor, as learned by $\IRM$, for any $\Etr$ with a sufficient number of environments in ``general position.''\footnote{The problem \eqref{eqn:two-bit-envs} does not fit the setting of their Theorem 9, because flipping signs cannot be phrased as independent additive noise.} \parencite[See also][Section 5.]{rosenfeld20risksofirm} It was left to future work whether $\IRM_{\Wscalar}$ learns invariant predictors in the sense of $\IRM$ more generally as well.

\paragraph{\boldmath A failure mode of $\IRM_{\Wscalar}$ and $\IRMvone$} We show that in fact for a simple set of two-bit environments, $\IRM_{\Wscalar}$ finds a predictor worse than that learned by $\IRM$, and even worse than the one learned by $\ERM$. %

This occurs, e.g., for $\calE = \calE_{0.1}$
with training environments $\Etr = \set{e_1 = (0.1,0.2), e_2 = (0.1,0.25)}$. 
The learned predictors are (approximately) as follows.
\predictor{$\pred_{\ERM}$}{$\phantom{-}0.9375$}{$\phantom{-}0.4464$}{$-0.4464$}{$-0.9375$}
\predictor{$\pred_{\IRM}$}{$\phantom{-}0.8$}{$\phantom{-}0.8$}{$-0.8$}{$-0.8$}
\predictor{$\pred_{\IRM_{\Wscalar}}$}{$\phantom{-}0.9557$}{$\phantom{-}0.2943$}{$-0.2943$}{$-0.9557$}
$X_1$ is the stronger signal for $Y$ in this $\Etr$,
and all of these predictors make the same binary predictions,
but with differing amounts of confidence.
Extrapolating to the same kind of test environment where the correlation of $X_2$ has flipped,
$e_\mathrm{test} = (0.1, 0.9)$,
we observe the following (approximate) losses:
\begin{center}{\renewcommand{\arraystretch}{1.2}
\begin{tabular}{|l!{\vrule width 1.3pt}cccc|}
\hline
& $\pred_{\ERM}$ & $\pred_{\IRM}$ & $\pred_{\IRM_{\Wscalar}}$ & $\pred_0$\\
\hline
$\calL_{e_{1}}(\cdot)$ & $0.15$ & $0.18$ & $0.15$ & $0.5$\\
$\calL_{e_{2}}(\cdot)$ & $0.16$ & $0.18$ & $0.17$ & $0.5$\\
$\calL_{e_{\mathrm{test}}}(\cdot)$ & $0.28$ & $0.18$ & $0.38$ & $0.5$\\
\hline
\end{tabular}} .
\end{center}
The relation between $\IRM$ and $\ERM$ is as expected: $\IRM$ trades slightly worse loss on the training environments for much better extrapolation to the distant environment $e_{\mathrm{test}} = (0.1, 0.9) \in \calE_{0.1}$.
But while $\IRM_\Wscalar$ also suffers slightly on the training environments,
it is even worse than $\ERM$ at extrapolation to $e_{\mathrm{test}}$!
The invariant feature $X_1$ is more correlated with $Y$ than the non-invariant feature $X_2$ in all of the training environments, and yet $\IRM_{\Wscalar}$ depends on $X_2$ even \emph{more} seriously than $\ERM$ does.

Moreover, this is not a carefully-selected pathological example that would go away with more training environments. In fact, $\IRM_{\Wscalar}$ chooses the same predictor even if we include {\em any number of} additional training environments $(0.1,\beta_e)$ for $\beta_e < 0.28$. Indeed, we show that for these two-bit environments $\calE_{\alpha}$, any two training environments are sufficient to recover the set of all invariant predictors (proof in \cref{appendix:two-bits}).

\begin{restatable}{proposition}{propidentifyinvariances}\label{prop:identify-invariances}
Under \cref{setting:binary}, for all $\alpha \in (0,1)$ and $\Etr = \set{e_1, e_2}$ for any two distinct $e_1, e_2 \in \calE_{\alpha}$,
\[ \text{(i) } \Iscalar(\Etr) = \Iscalar(\calE_{\alpha})
\quad \text{ and } \quad
\text{(ii) } \calI(\Etr) = \calI(\calE_{\alpha}).\]
\end{restatable}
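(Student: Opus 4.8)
The plan is to exploit, for both parts, the fact that with $\alpha$ held fixed the distribution $\calD_{(\alpha,\beta)}$ places mass on the (at most four) points of $\calX = \sbit^2$ whose weights are \emph{affine} functions of $\beta$, so that any linear-in-the-masses equality that holds at two distinct values of $\beta$ must hold for all $\beta \in (0,1)$.

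\textbf{Part (i).} The inclusion $\Iscalar(\calE_\alpha) \subseteq \Iscalar(\Etr)$ is immediate since $\Etr \subseteq \calE_\alpha$. For the reverse, I would take $\pred \in \Iscalar(\Etr)$ and write it as $\pred = 1 \cdot \varphi$ (absorbing the scalar, as in the discussion preceding \eqref{eq:grad-0-scalar}). By that characterization, $\nabla_{w|w=1}\calL_e(w\cdot\varphi) = \Ex_{(x,y)\sim\calD_e}[\ell'(\varphi(x),y)\,\varphi(x)] = 0$ for each $e \in \Etr$, where $\ell'$ is the derivative of $\ell$ in its first argument. Since $\calX$ is finite this gradient is the finite sum $\sum_{x,y}\Pr_{\calD_{(\alpha,\beta)}}[X=x,Y=y]\cdot \ell'(\varphi(x),y)\varphi(x)$; the factors $\ell'(\varphi(x),y)\varphi(x)$ do not depend on $\beta$, and by \eqref{eqn:two-bit-envs} each mass $\Pr[X=x,Y=y]$ is a polynomial of degree at most $1$ in $\beta$ with $\alpha$ fixed. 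Hence $\beta \mapsto \nabla_{w|w=1}\calL_{(\alpha,\beta)}(w\cdot\varphi)$ is affine, and vanishing at the two distinct points $\beta_{e_1}\ne\beta_{e_2}$ forces it to vanish for all $\beta\in(0,1)$; so \eqref{eq:grad-0-scalar} holds for every $e\in\calE_\alpha$, i.e. $\pred\in\Iscalar(\calE_\alpha)$. This works verbatim for both $\sqloss$ and $\logloss$.

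\textbf{Part (ii).} Again $\calI(\calE_\alpha)\subseteq\calI(\Etr)$ is trivial, and for the converse I would use \cref{obs:expectations}. A one-line check from \eqref{eqn:two-bit-envs} shows every $x\in\calX$ has positive probability under $\calD_e$ for every $e\in\calE_\alpha$, so a representation $\varphi$ induces a single fixed partition of the four points of $\calX$ into its fibers, the common range $\calZ_\varphi^{e}$ is the same for all these $e$, and \cref{obs:expectations} reduces invariance of $\varphi$ over a set of environments to: for every part $G$ of this partition, $\Ex_{\calD_e}[Y\mid X\in G]$ is constant over those environments. So it suffices to compute $\Ex_{\calD_{(\alpha,\beta)}}[Y\mid X\in G]$ for each possible part $G\subseteq\calX$. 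Every partition of the $4$-element set $\calX$ either contains a singleton, or is the one-block partition, or is one of the three partitions into two pairs. A short calculation with \eqref{eqn:two-bit-envs} gives: (a) for a singleton $G$, $\Ex_{\calD_{(\alpha,\beta)}}[Y\mid X\in G]$ is a function of $\beta$ whose difference at two distinct points of $(0,1)$ factors with a nonzero $\alpha(1-\alpha)$ term (nonzero since $\alpha\in(0,1)$), hence differs; (b) for the partition into $\{X_2=1\}$ and $\{X_2=-1\}$ the conditional expectations are $1-2\beta$ and $2\beta-1$, again distinct for distinct $\beta$; and (c) for the one-block partition, the partition into $\{X_1=1\}$ and $\{X_1=-1\}$, and the ``diagonal'' partition into $\{X_1=X_2\}$ and $\{X_1\ne X_2\}$, every part has conditional expectation $0$ or $\pm(1-2\alpha)$, independent of $\beta$.

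Combining these, a partition is invariant over $\{e_1,e_2\}$ iff it is one of the three partitions in (c), iff it is invariant over all of $\calE_\alpha$; so the set of invariant representations, and hence the associated optimal $w$'s and the predictors $w\circ\varphi$, coincide for $\Etr$ and for $\calE_\alpha$, giving $\calI(\Etr)=\calI(\calE_\alpha)$ (these predictors being the $0$-predictor, from the one-block and diagonal partitions, and the predictor depending only on $X_1$, from the $\{X_1=\pm1\}$ partition). The one step needing genuine care is (a)--(b): one must check that the conditional expectations of $Y$ over the ``bad'' parts are not merely nonconstant in $\beta$ but cannot take equal values at two distinct $\beta\in(0,1)$ for a fixed $\alpha\in(0,1)$ — this is precisely where $\alpha\notin\{0,1\}$ is used, and it is the full-$\IRM$ analogue of the affine-in-$\beta$ argument used for the scalar case in Part (i). Everything else is routine bookkeeping over the four points of $\calX$ and the handful of partitions.
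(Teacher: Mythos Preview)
Your proposal is correct and follows essentially the same approach as the paper: Part~(i) is identical (the gradient is affine in $\beta$, so two zeros force identical vanishing), and Part~(ii) uses \cref{obs:expectations} together with a case-check of conditional expectations over subsets of $\sbit^2$, just as the paper does via an explicit table of all fifteen subsets. Your organization of Part~(ii) by partitions (noting that any partition containing a singleton is already ruled out) is slightly more streamlined than the paper's exhaustive subset enumeration, but the underlying argument is the same.
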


Thus, the issue is not just that we have don't have enough training environments.
Rather, as we will now show, what $\IRM_\Wscalar$ determines to be an ``invariant predictor'' is broader than our intuitive sense -- or $\IRM$'s notion -- of what it means to be invariant.

\paragraph{\boldmath Predictors in $\Iscalar(\calE_{\alpha})$}
Recall a predictor $\pred = 1 \cdot \varphi$ is in $\Iscalar(\Etr)$ if and only if $\varphi$ satisfies \cref{eq:grad-0-scalar}.

For $\sqloss$, this is same as having that for all $e \in \Etr$,
\[ \left . \frac{\partial}{\partial w} \inparen{\Ex\limits_{(X,Y)\sim\calD_e} ~ \frac{(w \cdot \varphi(X) - Y)^2}{2}} \right \rvert_{w=1} ~=~ 0\, , \]
or equivalently,
\begin{equation}\label{eqn:grad-constraint}
\Ex\limits_{(X,Y)\sim\calD_e} ~ (\varphi(X) - Y)\cdot\varphi(X) ~=~ 0
\tag{$\nabla_w$ for $\sqloss$}
.\end{equation}
This is a system of quadratic polynomials in four variables $\set{\varphi(x) : x \in \sbit^2}$.
For ease of visualization, we focus on {\em odd} predictors $\pred = 1\cdot \varphi \in \Iscalar(\Etr)$, namely those satisfying $\pred(x) = - \pred(-x)$ for all $x \in \sbit^2$.
This choice is motivated by the symmetry present in $\calD_e$ and the loss $\sqloss$, along with the observation that the predictors $\pred_{\ERM}$, $\pred_{\IRM}$ and $\pred_{\IRM_{\Wscalar}}$ are all odd.
This allows us to focus on just two variables $\varphi(1,1) = -\varphi(-1,-1)$ and $\varphi(1,-1) = -\varphi(-1,1)$.

\Cref{fig:sqls_grad_constraints} shows the solutions of \eqref{eqn:grad-constraint} among all odd $\varphi$
for four environments in $\calE_{0.1}$. %
There are precisely four odd choices of $\varphi \in \Iscalar(\calE_{0.1}) = \Iscalar(\Etr)$.
Two are the expected solutions $\pred_0$ and $\pred_{\IRM}$ described above; these are the only two predictors in $\calI(\Etr) = \calI(\calE_{0.1})$.
$\Iscalar(\calE_{0.1})$, however, contains two \emph{more} odd predictors, $\pred_1$ and $\pred_2$,
the former being $\pred_{\IRM_\Wscalar}$ from above.
$\pred_{\IRM_\Wscalar}$ achieves a smaller loss than the other solutions for the two training environments $(0.1, 0.2)$ and $(0.1, 0.25)$,
but higher loss than $\pred_\IRM$ for environments $(0.1, 0.4)$ or $(0.1, 0.9)$.
\Cref{fig:scalar-invariant-losses} visualizes the losses of these four odd predictors on environments with varying $\beta_e$.
\Cref{apx:two-bits-sqloss} has more details, including an analysis that explains precisely when these counterexamples arise.\footnote{This analysis was communicated to us by L\'eon Bottou.}

\begin{figure}[t]
\centering
\begin{lpic}[l(3mm),b(3mm)]{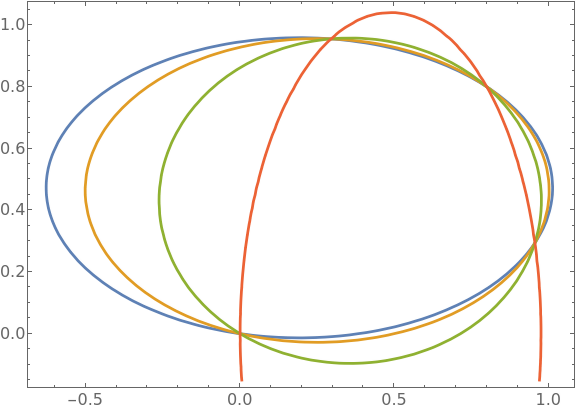(.5)}
\scriptsize
\lbl[c]{-6,60,90;\textcolor{black!60}{$\varphi(1,1)=-\varphi(-1,-1)$}}
\lbl[c]{80,-5;\textcolor{black!60}{$\varphi(1,-1)=-\varphi(-1,1)$}}
\tiny
\lbl[c]{35,90,19;\textcolor{WolframOne}{\boldmath$e=(0.1,0.2)$}}
\lbl[c]{39,76,40;\textcolor{WolframTwo}{\boldmath$e=(0.1,0.25)$}}
\lbl[c]{54,73,50;\textcolor{WolframThree}{\boldmath$e=(0.1,0.4)$}}
\lbl[c]{73,65,71;\textcolor{WolframFour}{\boldmath$e=(0.1,0.9)$}}
\normalsize
\lbl[c]{68,27,0;\textcolor{black!70}{\boldmath$\pred_0$}}
\lbl[c]{133,88,0;\textcolor{black!70}{\boldmath$\pred_{\IRM}$}}
\lbl[c]{88,88,0;\textcolor{black!70}{\boldmath$\pred_1$}}
\lbl[c]{127,45,0;\textcolor{black!70}{\boldmath$\pred_2$}}
\end{lpic}
\caption{Odd solutions to \eqref{eqn:grad-constraint} for four environments in $\calE_{0.1}$.%
}
\label{fig:sqls_grad_constraints}
\end{figure}

\begin{figure}
\centering
\begin{lpic}[t(4mm),r(5mm)]{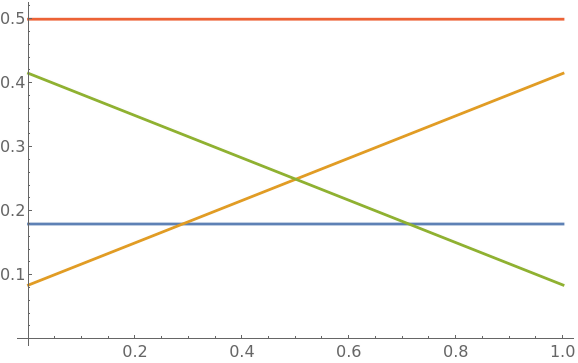(0.5)}
\normalsize
\lbl[c]{10,97;\textcolor{black!60}{\boldmath $\calL_e(\pred)$}}
\lbl[c]{153,6;\textcolor{black!60}{\boldmath $\beta_e$}}
\lbl[c]{75,30;\textcolor{WolframOne}{\boldmath$\pred_{\IRM}$}}
\lbl[c]{24,20;\textcolor{WolframTwo}{\boldmath$\pred_1$}}
\lbl[c]{130,20;\textcolor{WolframThree}{\boldmath$\pred_2$}}
\lbl[c]{75,82;\textcolor{WolframFour}{\boldmath$\pred_0$}}
\end{lpic}
\caption{Losses $\calL_e$ (for $\ell = \sqloss$) of odd predictors in $\Iscalar(\calE_{0.1})$ for various $e = (0.1,\beta_e)$.}
\label{fig:scalar-invariant-losses}
\end{figure}

Thus, $\IRM_{\Wscalar}$ can find representations $\varphi$ which are not {\em invariant} in the sense of \cref{def:all-invariant}.
In particular, for $\calE_{0.1}$ with $\sqloss$,
$\IRM_\Wscalar$'s feasible set of solutions is
$\Iscalar(\Etr) \supsetneq \calI(\Etr)$, or equivalently $\Iscalar(\calE_{0.1}) \supsetneq \calI(\calE_{0.1})$.

As seen from \cref{fig:scalar-invariant-losses}, $\pred_{\IRM_{\Wscalar}} = \pred_1$ has the lowest loss of those four solutions for $\beta_e \le 0.28$.
More training environments will not help $\IRM_{\Wscalar}$ pick $\pred_{\IRM}$,
unless the average value of $\beta_e$ across environments $e \in \Etr$ is between $0.29$ and $0.71$. If the average value of $\beta_e$ exceeds $0.72$, $\IRM_{\Wscalar}$ switches to the other solution $\pred_2$.

We know that $\IRMvone$ becomes exactly $\ERM$ when its regularization weight is $\lambda = 0$,
and $\IRM_\Wscalar$ for $\lambda = \infty$.
\Cref{fig:irmv1_interpolation}
shows\footnote{The $\IRMvone$ objective can be non-convex, even for $\sqloss$, and typical optimization algorithms sometimes find local minima. We instead solved $\IRMvone$ by explicitly enumerating the (odd) stationary points.}
the solution smoothly interpolating between
$\pred_{\ERM}$ and $\pred_{\IRM_{\Wscalar}}$, with the reliance on $X_2$ increasing as $\lambda \to \infty$.

\begin{figure}[t]
\centering
\begin{lpic}[r(10mm)]{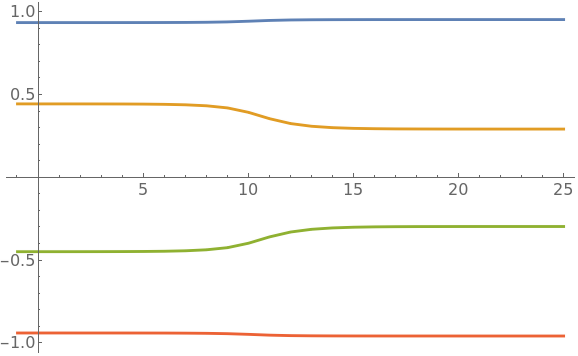(0.46)}
\scriptsize
\lbl[c]{158,45;\textcolor{black!60}{\boldmath $\log_2(\lambda)$}}
\tiny
\lbl[r]{142,81;\textcolor{WolframOne}{\boldmath$\varphi(1,1)$}}
\lbl[r]{142,62;\textcolor{WolframTwo}{\boldmath$\varphi(1,-1)$}}
\lbl[r]{142,29;\textcolor{WolframThree}{\boldmath$\varphi(-1,1)$}}
\lbl[r]{142,9;\textcolor{WolframFour}{\boldmath$\varphi(-1,-1)$}}
\end{lpic}
\caption{$\IRMvone$ on $\Etr = \set{(0.1,0.2),(0.1,0.25)}$. The horizontal axis is $\log_2(\lambda)$, with $-1$ representing $\lambda = 0$.}
\label{fig:irmv1_interpolation}
\end{figure}

\paragraph{\boldmath $\logloss$ loss} A similar failure mode occurs for $\logloss$ on $\calE_{0.05}$ when training on $\Etr = \{(0.05,0.1),(0.05,0.2)\}$. We give more details in \cref{apx:two-bits-logloss}.

\subsection{Experiments with \ColoredMNIST} \label{sec:colored-mnist}
We now confirm that the failure mode studied above can also arise in practical training of deep networks based on $\IRMvone$.
\ColoredMNIST corresponds to the two-bit environments above,
where $X_1$ is a (grayscale) image from \texttt{MNIST},
and $X_2$ is a color (red or green) which is assigned to that image.%
\footnote{In practice, we sample the image $X_1$ first and then flip $Y$ with probability $\alpha_e$; this is equivalent.}
Thus, a learning algorithm
which finds global minima of the $\IRMvone$ population-level objective
in a model capable of perfectly classifying \texttt{MNIST} digits
would behave exactly as described above.
In practice, however, we optimize empirical estimates of the risk and gradient penalty,
in a model class which may not contain an exactly perfect digit classifier,
with an algorithm which may not find the global optimum.

\begin{figure}[t]
    \centering
    \begin{subfigure}[b]{\columnwidth}
        \includegraphics[width=\textwidth]{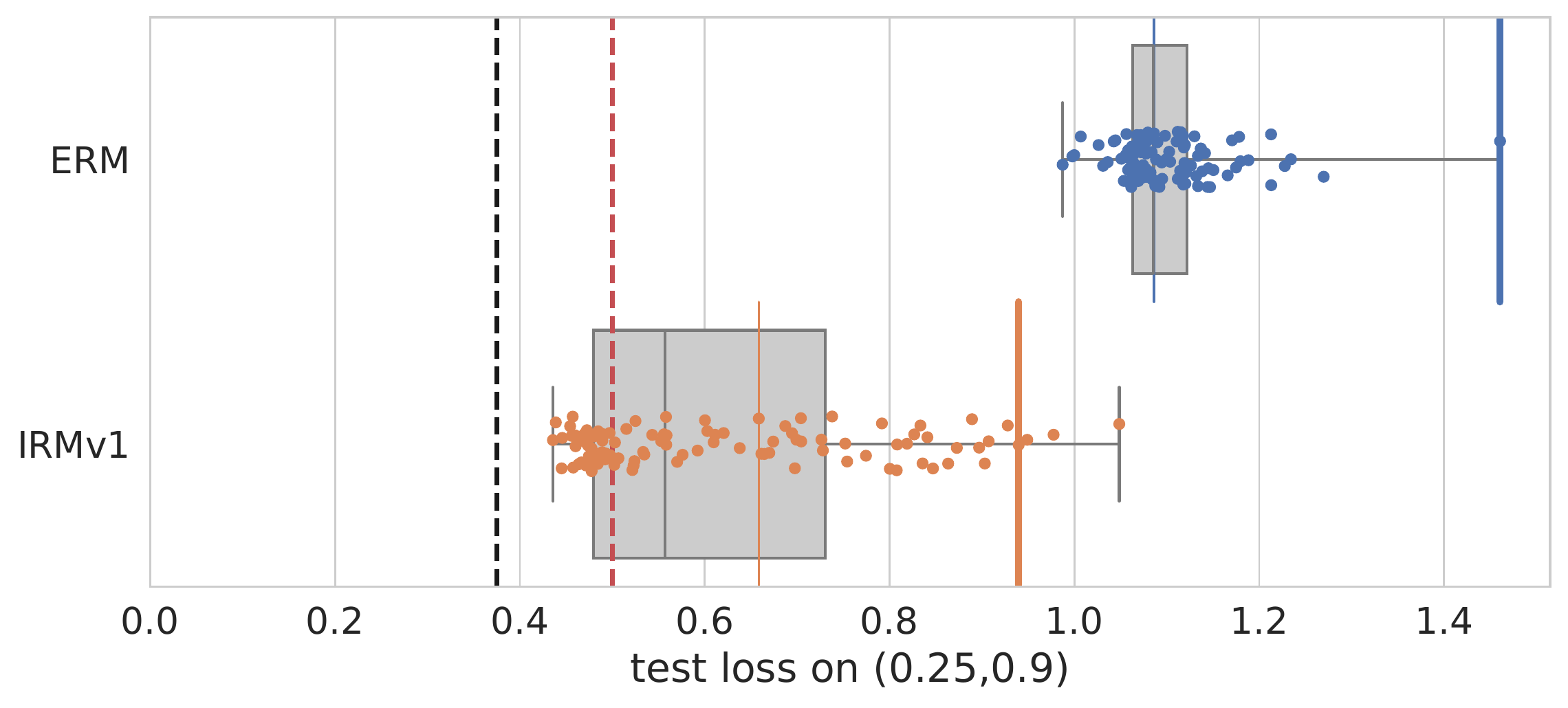}
        \caption{$\Etr = \{ (0.25, 0.1), (0.25, 0.2) \}$, $e_\mathrm{test} = (0.25, 0.9)$}
        \label{fig:square:normal:digit:good}
    \end{subfigure}

    \begin{subfigure}[b]{\columnwidth}
        \includegraphics[width=\textwidth]{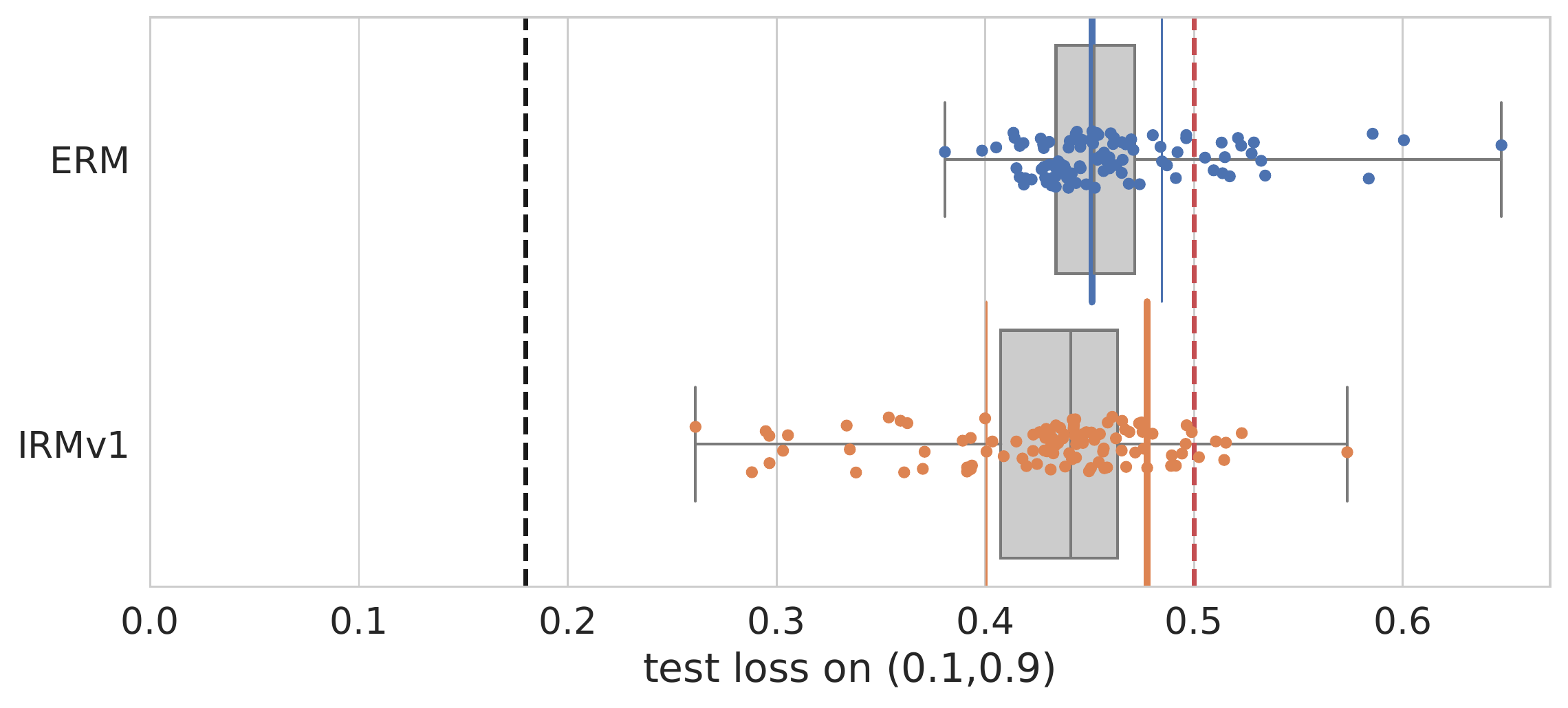}
        \caption{$\Etr = \{ (0.1, 0.2), (0.1, 0.25) \}$, $e_\mathrm{test} = (0.1, 0.9)$}
        \label{fig:square:normal:digit:bad}
    \end{subfigure}

    \caption{%
        Performance on $e_\mathrm{test}$ when training the given algorithm on $\Etr$, using square loss $\sqloss$,
        with a fully-connected network.
        100 repetitions are shown,
        using different random hyperparameters and training splits;
        boxplots show sample quartiles.
        Black dashed line (left) shows expected loss of the optimal invariant predictor $\pred_\IRM$;
        red dashed line (right) shows expected loss of the other predictor in $\calI(\Etr)$, the null predictor $\pred_0$.
        Shorter, colored vertical lines show the test set performance of the predictor
        which minimizes the training objective, \eqref{eqn:erm} or \eqref{eqn:irmv1} (with $\lambda = 10^6$).
    }
    \label{fig:square:normal}
\end{figure}

One significant practical issue with $\IRMvone$ is in hyperparameter tuning,
since we wish to find models which generalize to environments quite different from $\Etr$.
\Citet{arjovsky19invariant} chose hyperparameters
arbitrarily for their $\ERM$ networks,
and for $\IRMvone$ by selecting a network with randomly selected hyperparameters
which performed the best \emph{on the test set} (specifically, the model with the highest minimum accuracy on $\Etr \cup \{ e_\mathrm{test} \}$).
Since this significantly advantages $\IRMvone$ over $\ERM$,
we instead consider the distribution of performances with random hyperparameters
from the same proposal distribution as used by \citeauthor{arjovsky19invariant}.
We also note which of these models minimized the objective on $\Etr$
(using a fixed, large $\lambda$ to compare the objective for $\IRMvone$).
Currently, there is no known principled approach for choosing $\lambda$; as noted by \citet{gulrajani20lost}, this is often critical to the practical performance of $\IRM$.

\citet{arjovsky19invariant} use a fully-connected ReLU network with one hidden layer,
operating on the red and green channels of a $14 \times 14$ image.
Running $\ERM$ and $\IRMvone$ on this architecture with $\sqloss$
in the original \ColoredMNIST problem
shows  (\cref{fig:square:normal:digit:good})
that $\IRMvone$ handily outperforms $\ERM$ in test loss,
though it does not quite achieve the performance of the best possible $\pred_\IRM$,
and model selection based on $\Etr$ would choose a predictor notably worse on the test set than the null predictor $\pred_0$.
Moving to the example failure mode discussed above, this is no longer the case (\cref{fig:square:normal:digit:bad}):
the two algorithms perform about the same in test loss,
with model selection on $\Etr$ selecting a model with performance about the same as $\pred_0$ for each algorithm.
Although the practical instantiation of $\IRMvone$ clearly suffers here,
it is not \emph{worse} than $\ERM$ as we would expect for the population-optimal solutions.

In this representation, $X_1$ (digit) and $X_2$ (color) are quite ``entangled.''
In \cref{appendix:colormnist}, we consider an architecture which processes
the grayscale image and total color of the image separately,
thus becoming a little closer to the idealized setting \eqref{eqn:two-bit-envs};
here the failure of $\IRMvone$ compared to $\ERM$ becomes more apparent.
We also explore many variations of the experiment,
including experiments with $\logloss$.

Thus, $\IRM_{\Wscalar}$'s surprising failure on the extremely simple problem \eqref{eqn:two-bit-envs}
is essentially reproduced with practical optimization of neural networks on $\ColoredMNIST$.

\section{\boldmath CAN \texorpdfstring{$\IRM$}{IRM} FAIL TO CHOOSE THE RIGHT PREDICTOR?} \label{sec:invariance-sufficient}
In the previous section,
we saw an example where
$\IRM_\Wscalar$ was able to identify $\Iscalar(\calE)$,
since $\Iscalar(\calE) = \Iscalar(\Etr)$ there,
but chose a predictor in $\Iscalar(\calE)$
with worse out-of-distribution risk
for environments ``far from'' $\Etr$. This happened because the loss $\calL_e(\pred)$ of predictors $\pred \in \Iscalar(\calE)$ need not be the same (invariant) for all environments $e \in \calE$, and we pick the ``wrong'' predictor when optimizing $\sum_{e \in \Etr} \calL_{e}(\pred)$ over $\pred \in \Iscalar(\Etr)$.

Is the same possible for $\IRM$,
or does its implicit premise that the optimal \emph{invariant} predictor
on $\Etr$ will generalize well to $\calE$ hold? $\IRM$ can of course fail when $\calI(\Etr) \supsetneq \calI(\calE)$, when the training environments are not diverse enough to identify the right invariances. But what if we do have $\calI(\Etr) = \calI(\calE)$?

The loss of an invariant predictor $\pred \in \calI(\calE)$ need not be invariant for all $e \in \calE$: consider e.g.\ varying amounts of inherent additive noise in a regression setting.
This would still be acceptable as long as the \emph{best} invariant predictor with respect to the population loss is the same for all environments $e \in \calE$.
Contrarily, we now give a simple family of environments $\calE$,
training environments $\Etr \subseteq \calE$ satisfying $\calI(\Etr) = \calI(\calE)$,
and two predictors $\pred_1, \pred_2 \in \calI(\calE)$
such that $\calL_{e}(\pred_1) > \calL_{e}(\pred_2)$ for all $e \in \Etr$,
but $\calL_{\calE}(\pred_1) < \calL_{\calE}(\pred_2)$.
Hence $\IRM$ prefers $\pred_2$ to $\pred_1$ based on $\Etr$,
but $\pred_1$ has better worst-case loss.
It is thus generally difficult to handle out-of-distribution prediction in environments with more than one invariant predictor:
the invariant predictor which is best on training environments
might still perform poorly on unseen test environments,
\emph{despite being invariant}.

Consider environments $\calE$ over $\calX = \set{-1,0,1}^3$ and $\calY = \sbit$, where each environment $e$ is specified by a single parameter $\theta_e \in (-1/6, 1/3)$ as follows:
\begin{gather*}
    X_1 \gets \infork{
        -1 & \text{w.p. } \frac{1}{3}\\[1mm]
        \phantom{-}0 & \text{w.p. } \frac{1}{3}\\[1mm]
        +1 & \text{w.p. } \frac{1}{3}
    },\ 
    X_2 \gets \infork{
        -1 & \text{w.p. } \frac{1}{3}-\phantom{2}\theta_e\\[1mm]
        \phantom{-}0 & \text{w.p. } \frac{1}{3}+2\theta_e\\[1mm]
        +1 & \text{w.p. } \frac{1}{3}-\phantom{2}\theta_e
    }, \\[2mm]
    \Ex_{\calD_e}[Y | X_1, X_2] = 0.3 (X_1 + X_2) + g_{\theta_e}(X_1, X_2)\ ,
\end{gather*}
where $g_{\theta_e}(x_1, x_2)$ is given as\vspace{-4mm}
\begin{center}{\renewcommand{\arraystretch}{1.4}
\begin{tabular}{|l!{\vrule width 1.3pt}c|c|c|}
	\hline
	$g_{\theta}(x_1, x_2)$ & $x_2=-1$ & $x_2=0$ & $x_2=+1$\\
	\noalign{\hrule height 1.3pt}
	$x_1 = -1$ & $\theta(\theta + \frac{2}{3})$ & $-\theta(\frac{2}{3} - 2\theta)$ & $3\theta^2$\\
	\hline
	$x_1 = 0$ & $-\theta(\frac{2}{3} - 2\theta)$ & 0 & $\theta(\frac{2}{3} - 2\theta)$ \\
	\hline
	$x_1 = +1$ & $-3\theta^2$ & $\theta(\frac{2}{3} - 2\theta)$ & $- \theta(\theta + \frac{2}{3})$\\
	\hline
\end{tabular}.}
\end{center}
While the specific form of $g_{\theta}$ is a little involved, the main thing to note is that
\[ \Ex_{\calD_e}[g_{\theta_e}(X_1, X_2) \mid X_1] = 0 = \Ex_{\calD_e}[g_{\theta_e}(X_1, X_2) \mid X_2]\, \]
which means that $\Ex_{\calD_e}[Y | X_1] = 0.3 X_1$ as well as $\Ex_{\calD_e}[Y | X_2] = 0.3 X_2$.
Thus for $\sqloss$, $\calI(\calE)$ contains the predictors $\pred_1(x) = 0.3\, x_1$ and $\pred_2(x) = 0.3\, x_2$.
In fact, as shown in \cref{appendix:inv-suff-details}, $\IRM$ will indeed pick among these predictors in $\calI(\calE)$ for almost all $\Etr$ containing at least two distinct environments:

\begin{restatable}{proposition}{propinvinsuff}\label{prop:inv-insuff}
In \cref{setting:binary}, for $\calE$ as above, it holds for Lebesgue-almost all $\Etr \subseteq \calE$ with $|\Etr| \ge 2$ that
$ \calI(\calE) = \calI(\Etr) $.
Moreover, any $\pred \in \calI(\calE)$ depends on at most one of $x_1$ or $x_2$.
\end{restatable}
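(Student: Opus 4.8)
The plan is to translate invariance, via \cref{obs:expectations}, into a statement about the partition of $\calX$ induced by $\varphi$, thereby reducing both claims to a finite combinatorial analysis, and then to handle the two claims separately: $\calI(\calE)=\calI(\Etr)$ by a genericity (measure‑zero) argument over the finitely many partitions, and the ``at most one coordinate'' statement by classifying which partitions actually yield invariant predictors. Throughout, identify each environment with its parameter and write $\calD_\theta$ for the environment with parameter $\theta\in(-1/6,1/3)$; the third coordinate plays no role, since $Y$ depends only on $(X_1,X_2)$, so each block's conditional mean of $Y$ depends only on its projection to $\{-1,0,1\}^2$ and it suffices to analyze partitions there. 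By \cref{obs:expectations} — which holds in \cref{setting:binary} for both $\sqloss$ and $\logloss$ — a predictor $\pred=w\circ\varphi$ lies in $\calI(\calF)$, for $\calF\subseteq\calE$, if and only if the induced partition $\calP$ has the property that for every block $B\in\calP$ the map $\theta\mapsto m_B(\theta):=\Ex_{\calD_\theta}[\,Y\mid(X_1,X_2)\in B\,]$ is constant on $\calF$; in that case $\pred$ is a fixed, loss‑dependent, strictly monotone function of the common values $m_B$ (the identity for $\sqloss$), and in particular $\pred$ depends only on those of $x_1,x_2$ that $\calP$ depends on. Write $m_B(\theta)=N_B(\theta)/D_B(\theta)$, where $D_B(\theta)=\Pr_{\calD_\theta}[(X_1,X_2)\in B]$ has degree at most $1$ in $\theta$ and is strictly positive on $(-1/6,1/3)$, while $N_B(\theta)=\Ex_{\calD_\theta}[\,Y\,\indicator((X_1,X_2)\in B)\,]$ has degree at most $3$ (degree $\le1$ from the marginal of $X_2$, degree $\le2$ from $g_\theta$). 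Call $\calP$ \emph{good} if every $m_B$ is constant and \emph{bad} otherwise; good partitions are exactly those inducing predictors in $\calI(\calE)$, and $\calF$‑good partitions those inducing predictors in $\calI(\calF)$.

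For the equality $\calI(\calE)=\calI(\Etr)$, the inclusion $\calI(\calE)\subseteq\calI(\Etr)$ is immediate, so it suffices to show that, off a Lebesgue‑null set of $\Etr$'s, no bad partition is $\Etr$‑good. For a bad $\calP$, pick a block $B^{*}$ with $m_{B^{*}}$ non‑constant; then $h_{B^{*}}(s,t):=N_{B^{*}}(s)D_{B^{*}}(t)-N_{B^{*}}(t)D_{B^{*}}(s)$ is a nonzero polynomial (if it vanished identically, $m_{B^{*}}$ would be constant on an open set, hence constant). If $\Etr$ contains distinct $\theta_1,\theta_2$ with $h_{B^{*}}(\theta_1,\theta_2)\ne0$ then $m_{B^{*}}(\theta_1)\ne m_{B^{*}}(\theta_2)$, so $\calP$ is not $\Etr$‑good. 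There being only finitely many pairs $(\calP,B^{*})$, the set of $\Etr$ (of a fixed size $k\ge2$, viewed as a point of $(-1/6,1/3)^k$) for which some bad partition is $\Etr$‑good is contained in a finite union of zero sets of nonzero polynomials, hence is Lebesgue‑null; off it, $\calI(\Etr)$ is induced by good partitions only, so $\calI(\Etr)\subseteq\calI(\calE)$.

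It remains to show every good partition induces a predictor depending on at most one of $x_1,x_2$, which requires classifying the \emph{admissible} blocks (those with $m_B$ constant). Since $\deg D_B\le1$, admissibility forces $\deg N_B\le1$, i.e.\ the degree‑$2$ and degree‑$3$ coefficients of $N_B$ vanish; writing the marginal of $X_2$ as $\tfrac13+\theta\,\rho(X_2)$ and $g_\theta=\theta\,g^{(1)}+\theta^2 g^{(2)}$, these read $\sum_{x\in B}\rho(x_2)\,g^{(2)}(x)=0$ and $\sum_{x\in B}\bigl(\tfrac13 g^{(2)}(x)+\rho(x_2)\,g^{(1)}(x)\bigr)=0$, with the remaining two coefficients pinning down the value $m_B$. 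A direct finite check of these equations over subsets of $\{-1,0,1\}^2$, using crucially that $g_\theta$ was constructed so that $\Ex_{\calD_\theta}[g_\theta\mid X_1]=\Ex_{\calD_\theta}[g_\theta\mid X_2]=0$ (which is why unions of full rows and single full columns are admissible), shows that every admissible block is a union of full rows, or one of the admissible column‑unions, or satisfies $m_B\equiv0$. Consequently a good partition is either a coarsening of the partition by $X_1$ (so $\pred$ is a function of $x_1$ alone), or a coarsening of the partition by $X_2$ (so $\pred$ is a function of $x_2$ alone), or one on which $\pred$ is identically constant: a genuinely ``row‑type'' and a genuinely ``column‑type'' block always intersect and so cannot both lie in a partition, and whenever an admissible block with $m_B\equiv0$ is present, the admissibility equations force the remaining blocks to also have $m_B\equiv0$. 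In every case $\pred$ depends on at most one of $x_1,x_2$; the named predictors $\pred_1(x)=0.3\,x_1$ and $\pred_2(x)=0.3\,x_2$ (for $\sqloss$; their monotone transforms for $\logloss$), together with the null predictor, are the extreme cases. Combined with the previous paragraph, this proves the proposition.

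The main obstacle is this last classification, and specifically ruling out partitions that genuinely mix $X_1$‑ and $X_2$‑dependence: the vanishing of the top two $\theta$‑coefficients of $N_B$ already constrains $B$ heavily, but one must still show that the ``exotic'' admissible blocks that are neither row‑ nor column‑unions (such as the four‑corner set $\{\,x:|x_1|=|x_2|=1\,\}$, for which $m_B\equiv0$) can only be assembled into partitions on which the conditional‑mean predictor collapses to a constant — so that they never produce a predictor depending on both coordinates.
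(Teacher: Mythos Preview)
Your argument for $\calI(\calE)=\calI(\Etr)$ is essentially the paper's: the paper simply invokes \cref{prop:analytic-invariance}(i), whose proof is exactly the finite-union-of-polynomial-zero-sets argument you wrote out directly. So that part is fine and matches.

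The classification step is where the approaches diverge. The paper does not attempt an analytic argument at all: it symbolically enumerates (via SymPy) the $2^{9}-1=511$ non-empty subsets $S\subseteq\{-1,0,1\}^2$, computes $\Ex_{\calD_\theta}[Y\mid(X_1,X_2)\in S]$ for each, and records the $37$ subsets for which this is $\theta$-independent and the $6$ among those with nonzero value. From that list (\cref{tab:subsets-inv-insuff}) the conclusion is immediate: every block $\varphi^{-1}(z)$ with $w(z)\neq0$ must be one of those six, and they are all rows, row-unions, or single columns, so the induced predictor is a function of $x_1$ alone or of $x_2$ alone. Your attempt to replace this enumeration by a degree analysis of $N_B$ and $D_B$ is a reasonable instinct, but as you yourself flag, the final assembly step is not completed---and in fact one of the intermediate assertions is wrong as stated. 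The claim ``whenever an admissible block with $m_B\equiv0$ is present, the admissibility equations force the remaining blocks to also have $m_B\equiv0$'' is false: the partition by $X_1$ has the block $\{X_1=0\}$ with $m_B\equiv0$ while the other two blocks have $m_B=\pm0.3$. You presumably meant this only for the ``exotic'' (neither row- nor column-type) admissible blocks, but even there it needs actual verification. There is also an asymmetry your sketch misses: because the marginal of $X_2$ depends on $\theta$ while that of $X_1$ does not, unions of two rows (e.g.\ $X_1\in\{0,+1\}$) are admissible with nonzero value, whereas unions of two columns (e.g.\ $X_2\in\{0,+1\}$) are \emph{not}; this is visible in the paper's table and would need to be handled in any structural argument.

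In short: your route is more conceptual but currently has a genuine gap in the classification, and the paper's route sidesteps it entirely by finite enumeration. If you want to avoid the computer check, you would need to actually carry out the case analysis of admissible blocks and admissible partitions, being careful about the $X_1$/$X_2$ asymmetry, rather than relying on the heuristic claims in your final two paragraphs.
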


Focusing on the case of $\sqloss$, the loss of the predictors can be seen to be\footnote{This calculation does not need the specific form of $g_\theta$.}, for any $e \in \calE$,
\[
\calL_e(\pred_1) = 0.47 \quad \text{ and } \quad \calL_e(\pred_2) = 0.47 + 0.09 \cdot \theta_e
\]
Thus, if $\Etr$ only contains environments $e$ corresponding to $\theta_e < 0$, we will have that $\calL_e(\pred_2) < \calL_e(\pred_1)$ for all $e \in \Etr$, and yet the invariant predictor that minimizes $\sup_{e \in \calE} \calL_{e}(\cdot)$ is $\pred_1$. See \cref{fig:inv-insuff-losses} (\cref{appendix:inv-suff-details}) for an illustration of these loss as a function of $\theta_e$.

$\IRM$'s notion of invariance ensures $\Ex_{\calD_e}[Y \mid \varphi(X) = z]$ is invariant across $\calE$,
but allows the loss of the corresponding predictor $\calL_{e}(\pred)$ to differ across $e \in \calE$.
Here, in fact the full conditional distribution $\set{Y \mid \varphi(X)=z}$ is also invariant across $\calE$,
but even so, the loss varies.
\info{What are other papers that consider this? Risks of IRM cites like 6, but they don't seem relevant.}%
If we enforced a stronger notion of invariance
which requires the entire joint distribution $\set{(Y, \varphi(X))}_{(X,Y) \sim \calD_e}$ to be invariant across all $e \in \calE$,
we would not have faced this issue,
since $\calL_e$ would then be invariant, and indeed would pick $\pred_1$ in the example above.
Yet this joint invariance is clearly too strict for some problems:
it is impossible to achieve if the marginal distribution of $Y$ differs across environments,
and it is easy to construct other $\calE$ where $\IRM$ allows the intuitively-correct predictor
but joint invariance allows only a trivial constant predictor.

Thus, $\IRM$ is not always guaranteed to achieve optimal out-of-distribution loss, even when all the right invariances are captured by the training environments. The ``right'' notion of invariance really depends on what we know about the set of all environments $\calE$.

\section{WHEN DOES INVARIANCE GENERALIZE?}\label{sec:invariance-generalize}
In the examples of \cref{sec:bit-example,sec:invariance-sufficient}, it held that $\IRM$ or $\IRM_\Wscalar$ were able to identify predictors invariant over all, even unseen, environments:
specifically, $\calI_\calW(\calE) = \calI_\calW(\Etr)$.
That this holds is an implicit premise of the $\IRM$ framework.
Yet it is unclear in general when invariances discovered on training environments will generalize to unseen environments.
We now give some partial answers to this question.

For an arbitrary $\calE$, we of course cannot expect invariances observed across $\Etr$ to generalize over $\calE$:
simply consider adding a single entirely ``irrelevant'' $e$ to $\calE$.
To provide some structure, we consider parameterized sets of environments $\calE$.
For simplicity, we focus on finite $\calX$ and $\calY$, with $\calY \subseteq \bbR$.
Let $\Delta_{\calX \times \calY}$ denote the space of all probability distributions over $\calX \times \calY$, and let $\Theta \subseteq \bbR^d$.
A map $\Pi : \Theta \to \Delta_{\calX \times \calY}$ naturally defines a set of environments $\calE_{\Pi}$ corresponding to the set of distributions $\set{\Pi(\theta) : \theta \in \Theta}$.
For example, the two-bit environments $\calE_{\alpha}$ of \cref{sec:bit-example} are parameterized by the map $\Pi : \theta \mapsto e = (\alpha, \theta)$, for $\theta \in \Theta = (0,1)$.
\begin{center}
\em For $\Thetatr \subseteq \Theta$ and $\Etr = \set{ \Pi(\theta) \mid \theta \in \Thetatr }$, \\ when does it hold that $\calI(\Etr) = \calI(\calE_{\Pi})$?
\end{center}
Note that $\calI(\calE_\Pi) \subseteq \calI(\Etr)$ always holds,
but for any hope of $\calI(\Etr) \subseteq \calI(\calE_{\Pi})$, we must assume $\Etr$ contains a ``representative set'' of environments from $\calE_{\Pi}$.%

The most basic assumption to begin with is simply that $\Pi$ is continuous.
This is insufficient to guarantee invariance,
even for very large $\Thetatr$:
the map might simply ``change directions'' outside of $\Etr$.
We give a simple example below (proof in \cref{appendix:analytic-invariance}),
where even an uncountable number of environments in $\Etr$ do not allow us to understand the full behavior of $\calE$.
\begin{restatable}{proposition}{propcontinuousinvariance}\label{prop:continuous-invariance}
There exists a continuous map $\Pi : (0,1) \to \Delta_{\calX \times \calY}$ such that for $\Thetatr = \inparen{ 0,\frac14}$ and $\Etr = \Pi(\Thetatr)$, it holds that $\calI(\Etr) \ne \calI(\calE_{\Pi})$.
\end{restatable}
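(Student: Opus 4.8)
The plan is to exhibit an explicit two‑point construction in which $\Pi$ ``changes direction'' at $\theta = \tfrac14$. Take $\calX = \sbit$, $\calY = \sbit$, and work with $\sqloss$. In every environment let $X$ be uniform on $\sbit$ and let $\Ex_{\calD_\theta}[Y] = 0$; set $\Ex_{\calD_\theta}[Y \mid X = x] = q(\theta)\,x$, where $q : (0,1) \to (-1,1)$ is the kinked function $q(\theta) = \tfrac12$ for $\theta \le \tfrac14$ and $q(\theta) = \tfrac12 - (\theta - \tfrac14)$ for $\theta \ge \tfrac14$ (so $q$ takes values in $[-\tfrac14,\tfrac12]$). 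This specifies $\Pi(\theta) \in \Delta_{\calX \times \calY}$: each of the four atom probabilities is affine in $q(\theta)$ and lies strictly in $(0,1)$, and $\Pi$ is continuous because $q$ is (the two pieces agree at $\theta = \tfrac14$); in particular every $\calD_\theta$ has full support, so for any $\varphi$ the sets $\calZ_\varphi^e$ agree across environments.

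First I would pin down $\Etr$. Since $q \equiv \tfrac12$ on $\Thetatr = (0,\tfrac14)$, all of these environments are the \emph{same} distribution $\calD^\ast$, so $\Etr = \{\calD^\ast\}$. Because $\calX$ has only two points, any measurable $\varphi$ either separates them or is (effectively) constant, so there are exactly two representations to consider; by \cref{obs:expectations} — or directly, since for a single environment the loss‑minimizing $w$ on a given $\varphi$ is the conditional mean — $\calI(\Etr)$ is precisely the two predictors obtained from these two representations: the full‑information predictor $\pred_{\mathrm{full}}(x) = \tfrac12\, x$ (from the separating $\varphi$) and the trivial predictor $\pred_0 \equiv 0 = \Ex_{\calD^\ast}[Y]$ (from the constant $\varphi$).

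Next I would compute $\calI(\calE_\Pi)$ with the same dichotomy. For the separating representation, \cref{obs:expectations} demands that $\Ex_{\calD_\theta}[Y \mid X = x] = q(\theta)\,x$ be the same for all $\theta \in (0,1)$; this fails since $q$ is non‑constant on $(\tfrac14,1)$, so $\pred_{\mathrm{full}} \notin \calI(\calE_\Pi)$. For the constant representation, the requirement is only that $\Ex_{\calD_\theta}[Y] = 0$ be constant, which holds by construction, so $\pred_0 \in \calI(\calE_\Pi)$. Hence $\calI(\calE_\Pi) = \{\pred_0\} \subsetneq \{\pred_0, \pred_{\mathrm{full}}\} = \calI(\Etr)$, which is the claim.

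There is no genuinely hard step here; the only points requiring care are (i) verifying that $\Pi$ actually lands in the simplex and is continuous through the kink, and (ii) exploiting that $\calX$ has two elements so that only two representations exist and the invariant sets are the explicit finite sets above — with a larger $\calX$ one would also have to rule out ``intermediate'' coarsenings. A cosmetic shortcoming is that $\Pi$ is constant on $\Thetatr$; to obtain an example where $\Pi$ genuinely moves throughout $\Thetatr$ while the relevant invariance is preserved, one can enlarge $\calX$ by a third point whose conditional mean is $0$ in every environment and vary the $X$‑marginal over $\Thetatr$ (which, by \cref{obs:expectations}, does not affect which $\varphi$ are invariant), or instead move two conditional means in opposite directions on $\Thetatr$ so that a coarsened conditional stays fixed and then break that balance for $\theta > \tfrac14$.
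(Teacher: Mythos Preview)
Your construction is correct and establishes the proposition as stated: the map $\Pi$ is continuous, lands in the simplex, and on $\calX=\sbit$ the only two representation classes are separating versus constant, so your computation of $\calI(\Etr)=\{\pred_0,\pred_{\mathrm{full}}\}$ and $\calI(\calE_\Pi)=\{\pred_0\}$ is complete.

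The paper's proof takes a different route. It works over the two-bit environments $\calX=\sbit^2$ and defines a piecewise-linear path $\Pi(\theta)=(\alpha_\theta,\beta_\theta)$ that first varies $\beta$ with $\alpha$ fixed (on $\Thetatr$), then varies $\alpha$ with $\beta$ fixed. On $\Thetatr$ this gives \emph{uncountably many distinct} training distributions, all of which share the invariant representation $\varphi(X)=X_1$; once $\alpha$ starts moving, $X_1$ ceases to be invariant. The paper's example therefore directly witnesses the sentence preceding the proposition (``even an uncountable number of environments in $\Etr$ do not allow us to understand the full behaviour of $\calE$''), whereas your example has $|\Etr|=1$ because $\Pi$ is constant on $\Thetatr$, and a single environment trivially makes every representation invariant. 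Your closing paragraph already flags this and sketches two remedies; the second of these (move two conditional means in opposite directions so a coarsening stays fixed, then break the balance) is essentially the paper's mechanism. So: your argument is simpler and suffices for the literal statement, while the paper's buys a construction that is faithful to the surrounding discussion without the need for a patch.
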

On the other hand, if $\Pi$ is not only continuous but also analytic,
we {\em can} guarantee, under some conditions, that
invariances over $\Etr$ continue to hold over all of $\calE_\Pi$.
Let $\Pi_{(x, y)}(\theta) := \Pr_{(X, Y) \sim \Pi(\theta)}[X=x, Y=y]$
for each $(x, y) \in \calX \times \calY$. We say the map $\Pi : \Theta \to \Delta_{\calX \times \calY}$ is {\em analytic}
if, for each $(x, y) \in \calX \times \calY$,
$\Pi_{(x,y)} : \Theta \to [0,1]$ is analytic in $\theta$.

\begin{restatable}{proposition}{propanalyticinvariance}\label{prop:analytic-invariance}
Let $\Thetatr \subseteq \Theta \subseteq \bbR^d$, where $\Theta$ is a connected, open set.
Suppose $\Pi : \Theta \to \Delta_{\calX \times \calY}$ is analytic, $\calX$ and $\calY$ are finite and $\Etr = \Pi(\Thetatr)$. Then, under \cref{setting:binary},
\begin{enumerate}[leftmargin=6mm,label={(\roman*)}]
\item For almost all $\Thetatr$ with $|\Thetatr| \ge 2$: $\calI(\Etr) = \calI(\calE_{\Pi})$.
\item For all $\Thetatr$ with non-zero Lebesgue measure: $\Iscalar(\Etr) = \Iscalar(\calE_{\Pi})$.
\end{enumerate}
\end{restatable}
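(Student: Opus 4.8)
The plan is to exploit that finiteness of $\calX$ and $\calY$ makes both $\calI(\cdot)$ and $\Iscalar(\cdot)$ depend only on finitely many combinatorial choices together with the finitely many real-analytic coordinate maps $\theta\mapsto\Pi_{(x,y)}(\theta)$, and then to invoke the identity theorem for real-analytic functions. In both parts only the inclusion $\calI(\Etr)\subseteq\calI(\calE_\Pi)$ (resp.\ $\Iscalar(\Etr)\subseteq\Iscalar(\calE_\Pi)$) needs proof, since $\Etr\subseteq\calE_\Pi$ and $\calI$ is antimonotone in the environment set. For a fixed ``invariant structure'' I would write the condition for it to lie in $\calI$/$\Iscalar$ over environment $\theta$ as the vanishing of an explicit real-analytic function $\Theta\to\bbR$, and then argue that vanishing on $\Thetatr$ forces vanishing on all of $\Theta$. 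Part (ii) is the more robust: absorbing the scalar $w$ into $\varphi$ as in \cref{eq:grad-0-scalar}, a scalar-invariant predictor is just a vector $\varphi\in\bbR^{|\calX|}$, and $\varphi\in\Iscalar(\calE)$ iff \cref{eq:grad-0-scalar} holds for every $e\in\calE$. For fixed $\varphi$, the left-hand side of \cref{eq:grad-0-scalar} at $\theta$ is $\sum_x\varphi(x)^2\Pr_{\Pi(\theta)}[X=x]-\sum_x\varphi(x)\Ex_{\Pi(\theta)}[Y\,\indicator[X=x]]$ for $\sqloss$, and $-\sum_{x,y}\Pi_{(x,y)}(\theta)\,\varphi(x)\,y/(1+e^{y\varphi(x)})$ for $\logloss$; in either case a fixed polynomial (resp.\ linear) combination, with $\theta$-free coefficients, of the analytic maps $\Pi_{(x,y)}$, hence a real-analytic $H_\varphi:\Theta\to\bbR$. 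If $\varphi\in\Iscalar(\Etr)$ then $H_\varphi$ vanishes on $\Thetatr$; since $\Thetatr$ has positive Lebesgue measure and $\Theta$ is connected and open, the standard fact that a not-identically-zero real-analytic function on a connected open set has Lebesgue-null zero set forces $H_\varphi\equiv0$, i.e.\ $\varphi\in\Iscalar(\calE_\Pi)$.

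For part (i), by \cref{def:space-invariant} with $\calW$ unrestricted, $\pred=w\circ\varphi\in\calI(\calE)$ iff, writing $P$ for the partition of $\calX$ into level sets of $\varphi$, on every cell $A\in P$ having positive mass under some $e\in\calE$ the value $w|_A$ minimizes $\bar w\mapsto\Ex_e[\ell(\bar w,Y)\mid X\in A]$ for every such $e$; by \cref{obs:expectations} (and its $\logloss$ analogue, via $\Ex[Y\mid\cdot]=2\Pr[Y{=}1\mid\cdot]-1$) this says $w|_A$ equals a fixed strictly monotone function of $c_A(\theta):=\Ex_{\Pi(\theta)}[Y\mid X\in A]$ for every $\theta$ with $d_A(\theta):=\Pr_{\Pi(\theta)}[X\in A]>0$, while never-active cells constrain $w$ not at all. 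Since $\calX,\calY$ are finite there are only finitely many candidate $A\subseteq\calX$, and for each the numerator $n_A(\theta)$ and denominator $d_A(\theta)$ of $c_A$ are real-analytic, with $\{d_A=0\}\subseteq\{n_A=0\}$ because the $\Pi_{(x,y)}$ are nonnegative. Call $A$ \emph{degenerate} if $c_A$ is constant on $\{d_A>0\}$, equivalently if the real-analytic function $\Psi_A(\theta,\theta'):=n_A(\theta)d_A(\theta')-n_A(\theta')d_A(\theta)$ vanishes identically on the connected open set $\Theta\times\Theta$; for non-degenerate $A$ we have $\Psi_A\not\equiv0$, so its zero set in $\Theta^2$ is Lebesgue-null. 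Let $N\subseteq\Theta^2$ be a Lebesgue-null set containing, for each non-degenerate $A$, the zero set of $\Psi_A$, and containing any $(\theta_1,\theta_2)$ with some $\theta_i$ in the null set where some not-identically-zero $\Pi_{(x,y)}$ vanishes (and, for $\logloss$, where some $c_A\in\{-1,1\}$, the identically-deterministic case being consistently excluded on both sides). For $(\theta_1,\theta_2)\notin N$, every ever-active cell $A$ is active at both $\theta_1,\theta_2$, so for $\pred=w\circ\varphi\in\calI(\Pi(\{\theta_1,\theta_2\}))$ we must have $w|_A=c_A(\theta_1)=c_A(\theta_2)$; by $(\theta_1,\theta_2)\notin N$ this forces $A$ degenerate, whence $n_A-c_A(\theta_1)\,d_A\equiv0$ on all of $\Theta$ (using $\{d_A=0\}\subseteq\{n_A=0\}$ for the points where $d_A=0$), so $w|_A$ is optimal on $A$ for every $\theta\in\Theta$ and $\pred\in\calI(\calE_\Pi)$. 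Thus $\calI(\Pi(\{\theta_1,\theta_2\}))=\calI(\calE_\Pi)$ off the null set $N$, and antimonotonicity upgrades this to $\calI(\Etr)=\calI(\calE_\Pi)$ for every $\Thetatr$ containing such a pair, hence for almost every $\Thetatr$ with $|\Thetatr|\ge2$.

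I expect the main obstacle to be the bookkeeping in part (i): making precise what ``$\pred\in\calI(\calE)$'' means once $\varphi$ is replaced by its partition of $\calX$ (exactly where the measurability caveats of \cref{def:all-invariant,def:space-invariant} enter), correctly handling cells with zero probability under some or all environments, and covering the $\logloss$ corner cases where the per-cell minimizer is $\pm\infty$ and not attained. The analytic-geometry inputs --- the identity theorem, ``positive-measure zero set implies identically zero'', and the size of zero sets of analytic functions --- are standard and only need citing; the one genuinely new observation is that finiteness of $\calX,\calY$ collapses part (i) to separating finitely many nonconstant analytic ratios, which two generic environments already do, whereas part (ii) ranges over a continuum of $\varphi$ and therefore genuinely needs $\Thetatr$ to carry positive measure.
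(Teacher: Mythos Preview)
Your proposal is correct and follows essentially the same approach as the paper: for part (ii) you reduce to the vanishing of the analytic function $\theta\mapsto\nabla_{w|w=1}\calL_{\Pi(\theta)}(w\cdot\varphi)$ on a positive-measure set, and for part (i) you enumerate the finitely many subsets $A\subseteq\calX$, define the analytic cross-ratio $n_A(\theta)d_A(\theta')-n_A(\theta')d_A(\theta)$, and show that two generic environments avoid the union of these null zero sets over the non-degenerate (the paper's ``invalid'') $A$. Your treatment is slightly more explicit than the paper's about the support bookkeeping (ensuring ever-active cells are active at both $\theta_1,\theta_2$) and the $\logloss$ corner case $c_A\in\{-1,1\}$, but the core argument is the same.
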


The key step is that when $\Pi$ is analytic, the conditional expectations $\Ex_{\Pi(\theta)} [Y \mid \varphi(X) = z]$ and the gradient $\nabla_{w|w=1} \calL_{\Pi(\theta)}(w \cdot \varphi)$ are analytic functions in $\theta$;
the result is far stronger, however, for $\calI$ (where the set of representations is finite)
than for $\Iscalar$, where our analysis requires uncountably many training environments.
A version of \cref{prop:analytic-invariance} holds even for infinite spaces $\calX$ and $\calY \subseteq \bbR$, under a technical definition of analyticity of $\Pi$ (details in \cref{appendix:analytic-invariance}), although in this case our result for $\calI$ also requires $\Etr$ to have positive measure.

Recall that the examples studied in \cref{sec:bit-example,sec:invariance-sufficient} indeed had analytic parameterizations, and hence \cref{prop:analytic-invariance} implies that $\calI(\Etr) = \calI(\calE)$ holds for (almost) all $\Etr$ with at least two distinct environments.

\section{IRM WITH FINITE SAMPLES}\label{sec:sampling-issues}

Except for \cref{sec:colored-mnist}, we have so far only discussed algorithms ($\IRM$, $\IRM_\Wscalar$, and $\IRMvone$) defined in terms of the {\em population} losses of training environments.
In practice, however, we need to work with a finite number of samples from each training environment.
If we directly apply $\IRM$ or $\IRM_{\Wscalar}$ as stated in \eqref{eqn:irm} to empirical distributions,
all correlations will have a small amount of noise,
and it is extremely likely that the set of invariant predictors becomes empty.

On the other hand, $\IRMvone$ for a fixed $\lambda$ could be robust to sampling. We illustrate this in the two-bit environments of \cref{sec:bit-example}.
Consider training environments $\Etr = \set{(0.25,0.1), (0.25,0.2), (0.25,0.3)}$:
both $\IRM$ and $\IRM_{\Wscalar}$ are able to learn an invariant predictor.
However, when sampling finite datasets,
we only have that the empirical distribution of the two environments will be \emph{close} to -- but not exactly the same as -- the true distribution;
there may not be \emph{any} exactly-invariant predictors.
We illustrate this by evaluating $\IRM_{\Wscalar}$ on a set of training environments $\Etr' = \set{(0.245,0.105),(0.255,0.195),(0.251,0.302)}$, as a proxy for empirical distributions we see from finite samples.
$\IRM_{\Wscalar}$ learns the trivial $0$ predictor $\pred_0$;
\cref{fig:irmv1_good_vs_perturbed} shows the behavior of $\IRMvone$ for increasing $\lambda$.

For a fixed empirical distribution, it is likely that as $\lambda \to \infty$, $\IRMvone$ approaches $\IRM_{\Wscalar}$, and does not find a good invariant predictor.
If we instead take $n \to \infty$ for a fixed $\lambda$, though,
we should approach the population version of $\IRMvone$,
and hence %
taking $\lambda \to \infty$ at an appropriate rate as $n \to \infty$
may approach the population $\IRM_\Wscalar$ predictor.
\Citet{ahuja20sample} recently considered a variant of $\IRM_{\Wscalar}$ where the constraints \eqref{eq:grad-0-scalar} defining $\Iscalar(\Etr)$ need to hold $\eps$-approximately. When training on the objective with finite samples, they bounds the sample complexity to get an out-of-distribution loss close to that of the corresponding population version of this $\eps$-$\IRM_{\Wscalar}$.

\begin{figure}[t]
\centering
\begin{lpic}[r(7mm)]{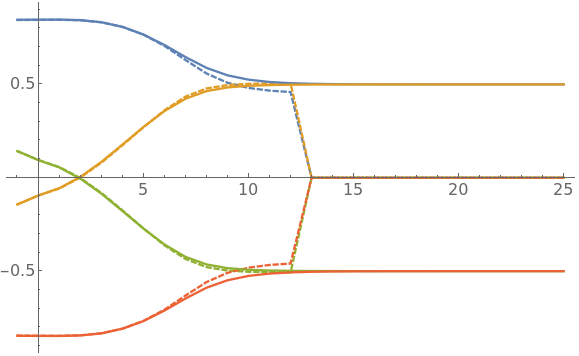(0.46)}
\scriptsize
\lbl[c]{158,45;\textcolor{black!60}{\boldmath $\log_2(\lambda)$}}
\tiny
\lbl[l]{47,81;\textcolor{WolframOne}{\boldmath$\pred(1,1)$}}
\lbl[l]{43,58;\textcolor{WolframTwo}{\boldmath$\pred(1,-1)$}}
\lbl[l]{43,32;\textcolor{WolframThree}{\boldmath$\pred(-1,1)$}}
\lbl[l]{47,9;\textcolor{WolframFour}{\boldmath$\pred(-1,-1)$}}
\end{lpic}
\caption{$\IRMvone$ algorithm on exact environments $\Etr$ (solid lines), and a noisy set $\Etr'$ (dashed lines; definitions in text). The horizontal axis is $\log_2(\lambda)$, with $-1$ for $\lambda = 0$. Results are similar for small $\lambda$, until the noisy set abruptly gives the 0-predictor.}
\label{fig:irmv1_good_vs_perturbed}
\end{figure}
Given the discrepancy between $\IRM_{\Wscalar}$ and $\IRM$ as pointed out in \cref{sec:bit-example}, however, it is important to make $\IRM$ itself more robust to finite samples. For instance, one possible approach would be to relax the requirement of $w \in \argmin_{\overline{w} : \calZ \to \what{\calY}} \calL_e(\overline{w}\circ \varphi)$ to 
\[ \calL_e(w\circ \varphi) ~\le~ \min_{\overline{w} : \calZ \to \what{\calY}}\ \calL_e(\overline{w}\circ \varphi) + \varepsilon \]
for a suitable $\varepsilon > 0$.
How to practically implement a version of this $\varepsilon$-$\IRM$ remains an open challenge.

\section{DISCUSSION}
The $\IRM$ framework of \citet{arjovsky19invariant} proposes a promising new paradigm of learning,
which attempts to exploit information we usually ignore
to find models robust to even some quite dramatic changes in the input distribution.
We have helped shed light on the applicability of this framework.

We now know that $\IRM_\Wscalar$ and $\IRMvone$ can be surprisingly different from $\IRM$, even on very simple environments.
This emphasizes the importance of finding practical algorithms to approximate $\IRM_\calW$ for some nonlinear class of functions $\calW$.

We also know %
that even for $\IRM$, choosing \emph{among} invariant predictors can also be vital for out-of-domain generalization,
and there exist cases where these algorithms choose the wrong one
for out-of-distribution robustness. This holds even if we insist on a stronger notion of invariance, namely that of the conditional distribution $\set{Y \mid \varphi(X)}_{(X,Y)\sim\calD_e}$.
To truly handle worst-case out-of-distribution generalization,
a stronger notion is needed: for example, it suffices to require invariance of the joint distribution $\set{(Y, \varphi(X))}_{(X,Y) \sim \calD_e}$, but this seems overly stringent.

We also now know more about the possibility of generalizing invariances learned from $\Etr$ to a larger set of environments $\calE$.
With significant structure on $\calE$,
it is possible to ensure $\calI(\calE) = \calI(\Etr)$,
but substantial questions remain as to the situation for $\Iscalar$ or more realistic assumptions on $\calE$.

Finally, we demonstrated that $\IRM$ and even $\IRMvone$ can be surprisingly brittle when run on samples, rather than populations.
Thus more analysis, and perhaps new algorithms, are needed
to realize the promise of this framework in practice.

\clearpage 
\acknowledgments{
The authors would like to thank L\'eon Bottou, Martin Arjovksy, Ishaan Gulrajani, and David Lopez-Paz for useful discussions, particularly the derivation of the form of predictors in \cref{apx:bottou-calc}.

Work was supported in part by NSF BIGDATA award 1546500 and NSF RI award 1764032. Work done while the authors participated in a special quarter on the Theory of Deep Learning sponsored by NSF TRIPOD award 1934843 (IDEAL) and while the first author participated in the Theory of Reinforcement Learning program at the Simons Institute for the Theory of Computing.
}

\printbibliography[title=References]
\newpage
\onecolumn
\appendix

\section{More details on \texorpdfstring{\cref{sec:irm}}{Section \ref{sec:irm}}} \label{appendix:irm}

\subsection{Subtleties involving Definitions \ref{def:all-invariant} and \ref{def:space-invariant}} \label{appendix:def-subtleties}

\Cref{def:all-invariant} implicitly assumes that a minimizer $w \in \argmin_{\overline{w} : \calZ \to \what{\calY}} \calL_e(\overline{w} \circ \varphi)$ exists.
This may not always be the case: for example, if we take logistic loss,
there will be no exact maximizer if the problem under $\varphi$ is separable,
i.e.\ $\set{Y \mid \varphi(X)=z}$ is constant for each $z \in \calZ_{\varphi}^e$.
To handle such cases, \cref{def:all-invariant} can be modified as follows.

\begin{definition}\label{def:all-invariant-inf}
A representation $\varphi : \calX \to \calZ$ is {\em invariant} for a set of environments $\calE$ if for all $\eps > 0$, there exists a $w : \calZ \to \what{\calY}$ such that $w$ is simultaneously $\eps$-optimal on $\varphi$ for all environments $e \in \calE$: that is, we have that
$\calL_e(w \circ \varphi) \le \inf_{\overline{w} : \calZ \to \what{\calY}} \calL_e(\overline{w} \circ \varphi) + \eps$.
\end{definition}

A related problem arises in \cref{def:space-invariant},
where $\calL_e(\overline{w} \circ \varphi)$ may not have a minimizer inside $\calW$.
In addition to the case where the data is separable (and hence we would want $w$ to take values $\pm \infty$),
a similar problem can arise even for square loss
if $\calW$ contains points arbitrarily close to the conditional expectation function but not the conditional expectation function itself;
this can happen, for instance, if $\calW$ is a Gaussian RKHS
and the conditional expectation is $L_2$-integrable but not in the RKHS.
To work around this problem, we can allow $w$ to lie in an appropriate ``closure'' of $\calW$.

\subsection{Proofs of \texorpdfstring{\cref{obs:expectations,lem:irm-full-vs-linear}}{Observation~\ref{obs:expectations} and Lemma~\ref{lem:irm-full-vs-linear}}}
\label{appendix:irm-proofs}

The following observation was made by \citet{arjovsky19invariant}.
We include a proof, for completeness and clarity.

\obsexpectations*
\begin{proof}%
Suppose the representation $\varphi : \calX \to \calZ$ is invariant for $\calE$. That is, there exists a predictor $w : \calZ \to \bbR$ such that $w \in \argmin_{\overline{w} : \calZ \to \bbR} \calL_e(\overline{w} \circ \varphi)$ simultaneously for all environments $e \in \calE$. In other words, for all $e \in \calE$ and $z \in \calZ_{\varphi}^e$, it holds that $w(z) \in \argmin_{\omega \in \bbR} \Ex_{\calD_e} \insquare{\ell(\omega, Y) \mid \varphi(X)=z}$.

First, consider the case of $\ell = \sqloss$. It follows that $w(z) = \Ex_{\calD_e} \insquare{Y \mid \varphi(X)=z}$ for all $e \in \calE$ and $z \in \calZ_{\varphi}^e$. In particular, it holds for all $e_1, e_2 \in \calE$ and $z \in \calZ_{\varphi}^{e_1} \cap \calZ_{\varphi}^{e_2}$ that $\Ex_{\calD_{e_1}} \insquare{Y \mid \varphi(X)=z} = \Ex_{\calD_{e_2}} \insquare{Y \mid \varphi(X)=z} = w(z)$.

Conversely, suppose that $\varphi$ is such that $\Ex_{\calD_{e_1}}[Y \mid \varphi(X) = z] = \Ex_{\calD_{e_2}}[Y \mid \varphi(X) = z]$ for all $z \in \calZ_{\varphi}^{e_1} \cap \calZ_{\varphi}^{e_2}$ and all $e_1, e_2 \in \calE$. Then, $w(z) \coloneqq \Ex_{\calD_e}[Y \mid \varphi(X) = z]$ for any $e$ such that $z \in \calZ_{\varphi}^e$ is well-defined and gives a predictor $w$ that is simultaneously optimal for all environments.

The case of $\logloss$ is handled similarly by noting that the minimizer of $\Ex_{\calD_e} \insquare{\logloss(\omega, Y) \mid \varphi(X)=z}$, given by
\[ \omega ~=~ \log \inparen{\frac{\Pr_{\calD_e}[Y=1 \mid \varphi(X) = z]}{\Pr_{\calD_e}[Y=-1 \mid \varphi(X) = z]}} ~=~ \log \inparen{\frac{1 + \Ex_{\calD_e}[Y \mid \varphi(X) = z]}{1 - \Ex_{\calD_e}[Y \mid \varphi(X) = z]}}\,,\]
uniquely corresponds to $\Ex_{\calD_e}[Y \mid \varphi(X) = z]$.
\end{proof}

The following lemma is implicit in \cite{arjovsky19invariant}.
\lemirmfullvslinear*
\begin{proof}%
We prove the lemma in the following three parts.
\begin{description}
\item [\boldmath $\calI(\calE) \subseteq \Iscalar(\calE)$.]
Given $\pred \in \calI(\calE)$, let $\varphi : \calX \to \calZ$ and $w : \calZ \to \bbR$ be such that $\pred = w \circ \varphi$, where $w \in \argmin_{\overline{w} : \calZ \to \bbR} \calL_e(\overline{w} \circ \varphi)$ for all $e \in \calE$.
Define $\varphi':\calX \to \bbR$ as $\varphi'(x) \coloneqq w(\varphi(x))$ and $w': \bbR \to \bbR$ to be the identity function $w'(z) = z$. Thus, we have $w' \circ \varphi' = w \circ \varphi = \pred$. Additionally, it holds that for all $e \in \calE$,  $w' \in \argmin_{\overline{w}' \in \Wscalar}\calL_e(\overline{w}' \circ \varphi')$. (Suppose for contradiction that this is not the case. Then for some environment $e \in \calE$, there exists $c \neq 1$ such that $\calL_e(c \pred) < \calL_e(\pred)$, corresponding to $\overline{w}' \in \Wscalar$ such that $\overline{w}'(z) \coloneqq cz$. Hence $\calL_e((c\cdot w)\circ\varphi) < \calL_e(w \circ \varphi)$, which contradicts that $w \in \argmin_{\overline{w} \in \calW} \calL_e(\overline{w} \circ \varphi)$.)
Thus, we get $\pred \in \Iscalar(\calE)$.

\item [\boldmath $\calI_{\Wlin^d}(\calE) \subseteq \Iscalar(\calE)$.] The proof of the above part shows, more generally, that $\calI_{\calW}(\calE) \subseteq \Iscalar(\calE)$ for any $\calW$ that is closed under scalar multiplications (that is, $w \in \calW \implies c \cdot w \in \calW$ for all $c \in \bbR$), it holds that $\calI_{\calW}(\calE) \subseteq \Iscalar(\calE)$. Since $\Wlin^d$ is closed under scalar multiplications, we get $\calI_{\Wlin^d}(\calE) \subseteq \Iscalar(\calE)$.

\item[\boldmath $\Iscalar(\calE) \subseteq \calI_{\Wlin^d}(\calE)$.] Given $\pred \in \Iscalar(\calE)$, let $\varphi : \calX \to \bbR$ and $w : \bbR \to \bbR$ be such that $\pred = w \circ \varphi$, where $w \in \argmin_{\overline{w} \in \Wscalar} \calL_e(\overline{w} \circ \varphi)$ for all $e \in \calE$. Define $\varphi': \calX \to \bbR^d$ as $\varphi'(x) \coloneqq \varphi(x) \cdot v$ for any unit vector $v \in \bbR^d$ and $w': \bbR^d \to \bbR$ as $w'(z) \coloneqq w(\inangle{z,v})$. It is easy to see that $w' \in \argmin_{\overline{w}' \in \Wlin^d}(\overline{w}' \circ \varphi')$ for all $e \in \calE$ and hence $w' \circ \varphi' = w \circ \varphi = \pred$. Thus, $\pred \in \calI_{\Wlin^d}(\calE)$.\qedhere
\end{description}
\end{proof}

\section{More details on Two-Bit Environments (from \texorpdfstring{\cref{sec:bit-example}}{Section \ref{sec:bit-example}})} \label{appendix:two-bits}

We show that for all $\alpha \in (0,1)$, just two environments in $\calE_{\alpha}$ are sufficient to determine both $\Iscalar(\calE_{\alpha})$ and $\calI(\calE_{\alpha})$. Thus, the failure of $\IRM_{\Wscalar}$ observed in \cref{sec:bit-example} is not due to lack of sufficiently representative training environments, but instead due to the difference between what $\IRM_{\Wscalar}$ deems an ``invariant predictor'' and the notion of invariance as in \cref{def:all-invariant}.

\propidentifyinvariances*
\begin{proof}
{\em (i).} By definition, we have $\Iscalar(\calE_{\alpha}) \subseteq \Iscalar(\Etr)$, since $\Etr \subseteq \calE_{\alpha}$. We show the converse. As noted in \cref{sec:irm}, for any convex and differentiable loss $\ell$ and for any set of environments $\calE$ we have that $\pred \in \Iscalar(\calE)$ if and only if $\pred = 1 \cdot \varphi$ such that $\nabla_{w | w=1} \calL_e(w \cdot \varphi) = 0$ for all $e \in \calE$. The key observation is that for environment $e = (\alpha, \beta_e)$,
\begin{align*}
\nabla_{w | w=1}\ \calL_e(w \cdot \varphi) &~\coloneqq~ \sum_{x_1, x_2, y} \  {\textstyle \Pr_{\calD_e}}[X_1=x_1, X_2=x_2, Y=y] \cdot \nabla_{w | w=1} \ell(w \cdot \varphi(x_1, x_2), y)\\
&~=~ \sum_{x_1, x_2, y} \  ((1-\alpha)\indicator_{x_1=y} + \alpha\indicator_{x_1 \ne y}) \cdot ((1-\beta_e)\indicator_{x_2=y} + \beta_e\indicator_{x_2 \ne y}) \cdot \nabla_{w | w=1} \ell(w \cdot \varphi(x_1, x_2), y)
\end{align*}
is affine in $\beta_e$. In particular, it can be decomposed as $\nabla_{w | w=1}\ \calL_e(w \cdot \varphi) = F(\varphi) + \beta_e G(\varphi)$ for some functions $F$ and $G$. If $\pred \in \Etr$, then we have that $\pred = 1\cdot \varphi$ such that that both $F(\varphi) + \beta_{e_1}G(\varphi) = 0$ and $F(\varphi) + \beta_{e_2}G(\varphi) = 0$ hold, which happens if and only if $F(\varphi) = 0 = G(\varphi)$. This implies $F(\varphi) + \beta_e G(\varphi) = 0$ for all $\beta_e \in (0,1)$, and hence $\pred \in \calE_{\alpha}$.

{\em (ii).} By definition, we have that $\calI(\calE_{\alpha}) \subseteq \calI(\Etr)$, since $\Etr \subseteq \calE_{\alpha}$. We show the converse by establishing that the only invariant predictors in $\calI(\Etr)$ are those that do not depend on $X_2$. By \cref{obs:expectations}, we have that $\varphi$ is invariant over $\Etr$ if and only if $\Ex_{\calD_{e_1}}[Y \mid \varphi(X) = z] = \Ex_{\calD_{e_2}}[Y \mid \varphi(X) = z]$ for all $z \in \calZ_{\varphi}^{e_1} \cap \calZ_{\varphi}^{e_2}$. In other words,  $\varphi$ is invariant over $\Etr$ if and only if $\Ex_{\calD_e}[Y \mid (X_1, X_2) \in \varphi^{-1}(z)]$ is identical for $e_1$ and $e_2$ as long as $\Pr_{\calD_e}[\varphi(X_1, X_2) =z]$ is non-zero in both environments.

\newcommand{\invCol}[1]{\textcolor{og}{\bf #1}}
\newcommand{\noninvCol}[1]{\textcolor{Gred}{#1}}
\begin{table}[ht]
\begin{center}{\renewcommand{\arraystretch}{1.5}
\begin{tabular}{|cccc|c|c|}
\hline
\multicolumn{4}{|c|}{Subset $S \subseteq \sbit^2$} & $\Ex_{\calD_{e}}[Y | (X_1, X_2) \in S]$ & Independent of $\beta_e$?\\
\hline
(1,1) & & & & $\frac{1 -\alpha - \beta_e}{1 - \alpha + (2\alpha-1) \beta_e}$ & \noninvCol{No}\\
 & (1,-1) & & & $\frac{\alpha - \beta_e}{- \alpha + (2\alpha-1)\beta_e}$ & \noninvCol{No}\\
& & (-1,1) & & $\frac{- \alpha + \beta_e}{- \alpha + (2\alpha-1)\beta_e}$ & \noninvCol{No}\\
& & & (-1,-1) & $\frac{\alpha - 1 + \beta_e}{1 - \alpha + (2\alpha-1) \beta_e}$ & \noninvCol{No}\\
(1,1) & (1,-1) & & & $1-2\alpha$ & \invCol{Yes}\\
(1,1) & & (-1,1) & & $1-2\beta_e$ & \noninvCol{No}\\
(1,1) & & & (-1,-1) & $0$ & \invCol{Yes}\\
& (1,-1) & (-1,1) & & $0$ & \invCol{Yes}\\
& (1,-1) & & (-1,-1) & $2\beta_e-1$ & \noninvCol{No}\\
& & (-1,1) & (-1,-1) & $2\alpha - 1$ & \invCol{Yes}\\
(1,1) & (1,-1) & (-1,1) & & $\frac{\alpha - 1 + \beta_e}{- 1 - \alpha + (2\alpha-1) \beta_e}$ & \noninvCol{No}\\
(1,1) & (1,-1) & & (-1,-1) & $\frac{- \alpha + \beta_e}{2 - \alpha + (2\alpha-1) \beta_e}$ & \noninvCol{No}\\
(1,1) & & (-1,1) & (-1,-1) & $\frac{\alpha - \beta_e}{2 - \alpha + (2\alpha-1) \beta_e}$ & \noninvCol{No}\\
& (1,-1) & (-1,1) & (-1,-1) & $\frac{1 - \alpha - \beta_e}{- 1 - \alpha + (2\alpha-1) \beta_e}$ & \noninvCol{No}\\
(1,1) & (1,-1) & (-1,1) & (-1,-1) &  $0$ & \invCol{Yes}\\
\hline
\end{tabular}}
\end{center}
\caption{$\Ex_{\calD_e}[Y \mid X \in S]$ for different choices of $S$ in the proof of Part (ii) of \cref{prop:identify-invariances}.}
\label{tab:subsets}
\end{table}

In \cref{tab:subsets}, we compute $\Ex_{\calD_e}[Y \mid (X_1, X_2) \in S]$ for all possible non-empty subsets $S \subseteq \sbit^2$, in terms of the environment parameters $\alpha$ and $\beta_e$ and track which of these depend or do not depend on $\beta_e$. The ones that depend on $\beta_e$ can be seen to be distinct for any two distinct values of $\beta_e$. Thus, the only invariant representations over $\Etr$ are those corresponding to the following partitions.
\begin{itemize}
\item $\set{\set{(1,1), (1,-1), (-1,1), (-1,-1)}}$, that is, $\varphi(x_1, x_2)$ is constant. The predictor $\pred \in \calI(\Etr)$ corresponding to this representation is the identically zero-predictor $\pred_0$ (for both $\sqloss$ and $\logloss$).
\item $\set{\set{(1,1), (1,-1)}, \set{(-1,1), (-1,-1)}}$, that is, $\varphi(1,1) = \varphi(1,-1)$ and $\varphi(-1,1) = \varphi(-1,-1)$, or essentially $\varphi(x_1, x_2) = x_1$. The predictor $\pred \in \calI(\Etr)$ corresponding to this representation is $\pred(x) = (1-2\alpha) \cdot x_1$ (for $\ell = \sqloss$) or $\pred(x) = \log \frac{(1-\alpha)}{\alpha} \cdot x_1$ (for $\ell = \logloss$) --- see proof of \cref{obs:expectations} for reference.
\item $\set{\set{(1,1), (-1,-1)}, \set{(1,-1), (-1,1)}}$, that is, $\varphi(1,1) = \varphi(-1,-1)$ and $\varphi(1,-1) = \varphi(-1,1)$, or essentially $\varphi(x_1, x_2) = x_1\cdot x_2$. While this representation does depend on $x_2$, the predictor $\pred \in \calI(\Etr)$ corresponding to this representation is the identically zero-predictor $\pred_0$ (for both $\sqloss$ and $\logloss$).
\end{itemize}
In all the above cases, we observe that the invariant representations over $\Etr$ are also invariant over $\calE_{\alpha}$ and moreover, the corresponding predictors are simultaneously optimal for all $e \in \calE_{\alpha}$ and hence in $\calI(\calE_{\alpha})$. Thus, we have $\calI(\Etr) \subseteq \calI(\calE_{\alpha})$.
\end{proof}

\subsection{Case of square loss}\label{apx:two-bits-sqloss}
We recall the example described in \cref{sec:bit-example} that demonstrated the difference between $\IRM_{\Wscalar}$ and $\IRM$. We have $\calE = \calE_{0.1}$ and $\Etr = \set{(0.1,0.2), (0.1,0.25)}$. We get from \cref{prop:identify-invariances} that $\Iscalar(\calE) = \Iscalar(\Etr)$, which can be numerically seen to contain (approximately) the following four predictors, by simultaneously solving \eqref{eqn:grad-constraint} for all $e \in \Etr$.
\begin{center}{\renewcommand{\arraystretch}{1.4}
\begin{tabular}{!{\vrule width 1.3pt}r!{\vrule width 1.3pt}c|c!{\vrule width 1.3pt}c|c!{\vrule width 1.3pt}c|c!{\vrule width 1.3pt}c|c!{\vrule width 1.3pt}}
\noalign{\hrule height 1.3pt}
& \multicolumn{2}{c!{\vrule width 1.3pt}}{$\pred_0$}
& \multicolumn{2}{c!{\vrule width 1.3pt}}{$\pred_{\IRM}$}
& \multicolumn{2}{c!{\vrule width 1.3pt}}{$\pred_{1}$}
& \multicolumn{2}{c!{\vrule width 1.3pt}}{$\pred_{2}$} \\
\cline{2-9}
& $X_2 = +1$ & $X_2 = -1$ & $X_2 = +1$ & $X_2 = -1$ & $X_2 = +1$ & $X_2 = -1$ & $X_2 = +1$ & $X_2 = -1$ \\
\hline
$X_1=+1$
& $0$ & $0$
& $\phantom{-}0.8$ & $\phantom{-}0.8$
& $\phantom{-}0.9557$ & $\phantom{-}0.2943$
& $\phantom{-}0.2943$ & $\phantom{-}0.9557$\\
\hline
$X_1=-1$
& $0$ & $0$
& $-0.8$ & $-0.8$
& $-0.2943$ & $-0.9557$
& $-0.9557$ & $-0.2943$ \\
\noalign{\hrule height 1.3pt}
\end{tabular}
}\end{center}

On the other hand, $\calI(\calE_{0.1})$ contains only two of the predictors, namely $\pred_0$ and $\pred_{\IRM}$, the latter being the optimal predictor chosen by $\IRM$ on $\Etr$ --- note that this predictor depends only on $X_1$.

\cref{fig:scalar-invariant-losses} shows the population square losses $\calL_e$ for each of the predictors in $\Iscalar(\calE_{0.1})$ for all $e \in \calE_{0.1}$. It can observed that for $e = (0.1,\beta_e)$ with $\beta_e < 0.28$, it holds that $\calL_e(\pred_1) < \calL_e(\pred_{\IRM})$. Thus, no matter how many training environments are present in $\Etr$, $\IRM_{\Wscalar}$ will choose $\pred_1$ as the optimal predictor as long as $\beta_e < 0.28$ for all $e \in \Etr$. On the other hand, $\IRM$ with just two environments learns the predictor $\pred_{\IRM}$.

We also note that the value $\alpha=0.1$ is not special either. In fact, a similar phenomenon as above is observed in $\calE_{\alpha}$ for any value of $\alpha < 0.1464$ or $\alpha > 0.8536$. The following section explains the meaning of these cutoff values.

\subsubsection{\boldmath Analytic characterization of odd predictors in \texorpdfstring{$\Iscalar(\calE)$}{IS(E)}}\label{apx:bottou-calc}

Following the initial version of this paper (which found these constants only by numerically solving certain quadratic systems),
L\'eon Bottou communicated to us the following clean analysis, which provides a closed-form understanding of these $\IRM_{\Wscalar}$ solutions and the range of $\alpha$ when such examples arise.
We are grateful to L\'eon for allowing us to include his calculations here.

Firstly, observe that for $\calX = \sbit^2$, a representation $\varphi(x_1, x_2)$ is odd if and only if it is {\em linear}, namely, $\varphi(x_1, x_2) := w_1x_1 + w_2x_2$.
Suppose $\Etr$ consists of two environments $e_1 = (\alpha, \beta_1)$ and $e_2 = (\alpha, \beta_2)$. From \eqref{eqn:grad-constraint}, we for any $\pred(x) = 1 \cdot \varphi(x) = w_1x_1 + w_2x_2 \in \Iscalar(\Etr)$ that
\begin{align*}
\Ex_{\calD_{e_1}} (w_1x_1 + w_2x_2 - y) (w_1x_1 + w_2x_2) &~=~ 0\\
\Ex_{\calD_{e_2}} (w_1x_1 + w_2x_2 - y) (w_1x_1 + w_2x_2) &~=~ 0
.\end{align*}
From the definition \eqref{eqn:two-bit-envs}, we have (i) $\Ex_{\calD_i}[x_1^2] = \Ex_{\calD_i}[x_2^2] = 1$, (ii) $\Ex_{\calD_i} [x_1 y] = a$, (iii) $\Ex_{\calD_i} [x_2 y] = b_i$ and (iv) $\Ex_{\calD_i}[x_1 x_2] = ab_i$, where $a := 1-2\alpha$ and $b_i = 1-2\beta_i$ for $i \in \set{1, 2}$. Thus, we get
\begin{align}
w_1^2 + w_2^2 + 2 w_1 w_2 ab_1 &~=~ w_1 a + w_2 b_1,\label{eq:env-1-scalar}\\
w_1^2 + w_2^2 + 2 w_1 w_2 ab_2 &~=~ w_1 a + w_2 b_2.\label{eq:env-2-scalar}
\end{align}
By subtracting and using $b_1 - b_2 \ne 0$, we get
\begin{equation}\label{eq:env-3-scalar}
2w_1 w_2 a = w_2
.\end{equation}
When $w_2 = 0$, we get from \eqref{eq:env-1-scalar} (or \eqref{eq:env-2-scalar}) that either $w_1 = 0$ or $w_1 = a$. %
But when $w_2 \ne 0$, we have from \eqref{eq:env-3-scalar} that $w_1 = 1/2a$. Substituting this in \eqref{eq:env-1-scalar} (or \eqref{eq:env-2-scalar}), we get the two additional solutions given by
\[ w_1 ~=~ \frac{1}{2a} \qquad \text{ and } \qquad w_2 ~=~ \pm \sqrt{\frac12 - \frac1{4a^2}} .\]
Note that these additional solutions (with $w_2 \ne 0$) exist only when $\frac12 - \frac1{4a^2} > 0$, or $(1-2\alpha)^2 > \frac12$. That is,
\[
\alpha ~<~ \frac12 - \frac1{2\sqrt{2}} ~\approx~ 0.1464 \qquad \text{ or } \qquad
\alpha ~>~ \frac12 + \frac1{2\sqrt{2}} ~\approx~ 0.8536
.\]
In this regime, the four odd (or linear) predictors in $\Iscalar(\calE_{\alpha})$ are
\begin{align*}
\pred_0(x) &~=~ 0\\
\pred_{\IRM}(x) &~=~ (1-2\alpha) \cdot x_1\\
\pred_1(x) &~=~ \textstyle \frac{1}{2-4\alpha} \cdot x_1 + \sqrt{\frac12 - \frac{1}{4 (1-2\alpha)^2}} \cdot x_2\\
\pred_2(x) &~=~ \textstyle \frac{1}{2-4\alpha} \cdot x_1 - \sqrt{\frac12 - \frac{1}{4 (1-2\alpha)^2}} \cdot x_2
.\end{align*}

\subsection{Case of logistic loss}\label{apx:two-bits-logloss}

We observe a similar phenomenon with logistic loss as was observed for square loss. We consider $\calE = \calE_{0.05}$ and $\Etr = \set{(0.05,0.1),(0.05,0.2)}$. Again, we get from \cref{prop:identify-invariances} that $\calI_{\Wscalar}(\calE) = \calI_{\Wscalar}(\Etr)$, which can be numerically seen to contain (approximately) the following predictors, by simultaneously solving \cref{eq:grad-0-scalar} for all $e \in \Etr$.

\begin{center}{\renewcommand{\arraystretch}{1.4}
\begin{tabular}{!{\vrule width 1.3pt}r!{\vrule width 1.3pt}c|c!{\vrule width 1.3pt}c|c!{\vrule width 1.3pt}c|c!{\vrule width 1.3pt}c|c!{\vrule width 1.3pt}}
\noalign{\hrule height 1.3pt}
& \multicolumn{2}{c!{\vrule width 1.3pt}}{$\pred_0$}
& \multicolumn{2}{c!{\vrule width 1.3pt}}{$\pred_{\IRM}$}
& \multicolumn{2}{c!{\vrule width 1.3pt}}{$\pred_{1}$}
& \multicolumn{2}{c!{\vrule width 1.3pt}}{$\pred_{2}$} \\
\cline{2-9}
& $X_2 = +1$ & $X_2 = -1$ & $X_2 = +1$ & $X_2 = -1$ & $X_2 = +1$ & $X_2 = -1$ & $X_2 = +1$ & $X_2 = -1$ \\
\hline
$X_1=+1$
& $0$ & $0$
& $\phantom{-}2.9444$ & $\phantom{-}2.9444$
& $\phantom{-}4.9847$ & $\phantom{-}0.9041$
& $\phantom{-}0.9041$ & $\phantom{-}4.9847$\\
\hline
$X_1=-1$
& $0$ & $0$
& $-2.9444$ & $-2.9444$
& $-0.9041$ & $-4.9847$
& $-4.9847$ & $-0.90413$ \\
\noalign{\hrule height 1.3pt}
\end{tabular}
}\end{center}
On the other hand, $\calI(\calE_{0.05})$ contains only two of the predictors, namely $\pred_0$ and $\pred_{\IRM}$, the latter being the optimal predictor chosen by $\IRM$ on $\Etr$ --- note that this predictor depends only on $X_1$.

\begin{figure}[t]
\centering
\begin{lpic}[l(5mm),b(4mm)]{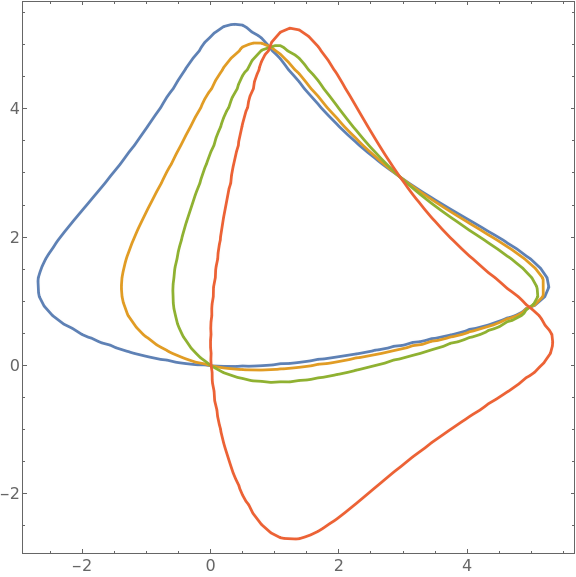(.5)}
\normalsize
\lbl[c]{-6,80,90;\textcolor{black!60}{$\varphi(1,1)=-\varphi(-1,-1)$}}
\lbl[c]{80,-6;\textcolor{black!60}{$\varphi(1,-1)=-\varphi(-1,1)$}}
\tiny
\lbl[c]{31,114,52;\textcolor{WolframOne}{\boldmath$e=(0.05,0.1)$}}
\lbl[c]{37,103,61;\textcolor{WolframTwo}{\boldmath$e=(0.05,0.2)$}}
\lbl[c]{43,90,70;\textcolor{WolframThree}{\boldmath$e=(0.05,0.4)$}}
\lbl[c]{61,88,81;\textcolor{WolframFour}{\boldmath$e=(0.05,0.9)$}}
\scriptsize
\lbl[c]{49,49,0;\textcolor{black!70}{\boldmath$\pred_0$}}
\lbl[c]{107,107,0;\textcolor{black!70}{\boldmath$\pred_{\IRM}$}}
\lbl[c]{70,125,0;\textcolor{black!70}{\boldmath$\pred_1$}}
\lbl[c]{125,70,0;\textcolor{black!70}{\boldmath$\pred_2$}}
\end{lpic}
\caption{Odd solutions to $\nabla_{w | w=1} \calL_e(w \cdot \varphi) = 0$ (with $\ell = \logloss$) for four environments in $\calE_{0.05}$. (Compare to \cref{fig:sqls_grad_constraints}.)}
\label{fig:logls_grad_constraints}
\end{figure}

\begin{figure}[t]
\centering
\begin{lpic}[t(4mm),r(5mm)]{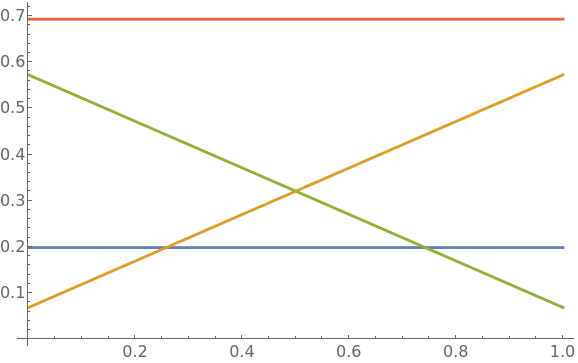(0.5)}
\normalsize
\lbl[c]{10,99;\textcolor{black!60}{\boldmath $\calL_e(\pred)$}}
\lbl[c]{153,6;\textcolor{black!60}{\boldmath $\beta_e$}}
\lbl[c]{75,24;\textcolor{WolframOne}{\boldmath$\pred_{\IRM}$}}
\lbl[c]{27,17;\textcolor{WolframTwo}{\boldmath$\pred_1$}}
\lbl[c]{127,17;\textcolor{WolframThree}{\boldmath$\pred_2$}}
\lbl[c]{75,82;\textcolor{WolframFour}{\boldmath$\pred_0$}}
\end{lpic}
\caption{Losses $\calL_e$ (for $\ell = \logloss$) of odd predictors in $\Iscalar(\calE_{0.05})$ for various $e = (0.05,\beta_e)$. (Compare to \cref{fig:scalar-invariant-losses}.)}
\label{fig:scalar-invariant-losses-logistic}
\end{figure}

\cref{fig:scalar-invariant-losses-logistic} shows the population square losses $\calL_e$ for each of the predictors in $\Iscalar(\calE_{0.05})$ for all $e \in \calE_{0.05}$. It can observed that for $e = (0.05,\beta_e)$ with $\beta_e < 0.25$, it holds that $\calL_e(\pred_1) < \calL_e(\pred_{\IRM})$. Thus, no matter how many training environments are present in $\Etr$, $\IRM_{\Wscalar}$ will choose $\pred_1$ as the optimal predictor as long as $\beta_e < 0.25$ for all $e \in \Etr$. On the other hand, $\IRM$ with just two environments learns the predictor $\pred_{\IRM}$.

We also note that the value $\alpha=0.05$ is not special; a similar phenomenon as above is observed in $\calE_{\alpha}$ for any value of $\alpha < 0.077$.

In the supplementary material, we include the Mathematica code (\texttt{two-bit/two-bit-irm.nb}, and a PDF version \texttt{two-bit/two-bit-irm.pdf}) that was used to compute $\pred_{\IRM}$ and $\pred_{\IRM_{\Wscalar}}$ solutions and plot \cref{fig:sqls_grad_constraints,fig:scalar-invariant-losses,fig:irmv1_interpolation,fig:irmv1_good_vs_perturbed,fig:logls_grad_constraints,fig:scalar-invariant-losses-logistic}.

\section{More \ColoredMNIST experiments} \label{appendix:colormnist}

We now consider more details and variations of the \ColoredMNIST experiments of \cref{sec:colored-mnist}.

The architecture used by \textcite{arjovsky19invariant} is fully connected,
mapping inputs of dimension $2 \cdot 14 \cdot 14$ to hidden dimension $h$,
from $h$ to $h$,
and then from $h$ to a scalar prediction,
with ReLU activations on each layer except the last.
The model is optimized with full-batch Adam for 501 steps,
with a scaled penalty on the squared (Frobenius) norm of each parameter,
and hyperparameters selected as:
\begin{itemize}
\item Hidden dimension $h$: $\lfloor 2^{\mathrm{Uniform}[6, 9)} \rfloor$.
\item Weight of $L_2$ regularization: $10^{\mathrm{Uniform}[-2, -5)}$.
\item Learning rate: $10^{\mathrm{Uniform}[-2.5, -3.5)}$.
\item For $\IRMvone$, the gradient penalty weight $\lambda$ is $1$ for $\mathrm{Uniform}\{50, 51, \dots, 250\}$ iterations, then $10^{\mathrm{Uniform}[2, 6)}$.
\end{itemize}

In \cref{fig:square:normal-full},
we reproduce the results of \cref{fig:square:normal} (left column)
but also show results of versions of the architecture
forced to depend only on $X_1$ or $X_2$
while training via $\ERM$:
\texttt{color-only} takes inputs of shape $2$,
a one-hot indicator for whether the color is red or green,
while \texttt{digit-only} receives a flattened grayscale image of dimension $14 \cdot 14$.
This allows us to see the amount of variation we can expect based purely on changes in the learning process.
We also show (in the right column) a flipped version of the problem,
where the invariant feature is color rather than the digit identity;
this is the same from the point of view of the abstract Two-Bit environment,
but allows us to see how much of the behavior depends on the different way that this network processes digit and color information.

As mentioned in \cref{sec:colored-mnist},
we also consider a ``split'' variant of the architecture,
which is perhaps closer to the abstract two-bit version.
Here,
the network has two branches:
one takes a grayscale $14 \times 14$ version of its input,
which is processed as in the previous architecture down to a scalar.
The other branch takes a one-hot (two-dimensional) indicator for the color,
and (via a $2 \times 1$ linear layer) outputs an arbitrary scalar for each color.
The top of the network takes in these two scalar values,
processes them with an 8-dimensional ReLU layer,
then makes a final linear prediction.
\texttt{color-only} and \texttt{digit-only} versions simply omit one of those branches.
Results for $\sqloss$ are shown in \cref{fig:square:split-full}.
Here we most clearly see the ``average-case'' failure of $\IRMvone$ in the color-invariant case.

Similar results for $\logloss$ are shown in \cref{fig:nll:split-full,fig:nll:normal-full}.
The expected failure mode is generally less visible here,
though it is more evident in the color-invariant settings than the digit-invariant ones.

In the supplementary material, we include the PyTorch code, modified from that of \citet{arjovsky19invariant}, used to produce these results (\texttt{colored-mnist} directory).

\begin{figure}
    \centering
    \includegraphics[width=\textwidth]{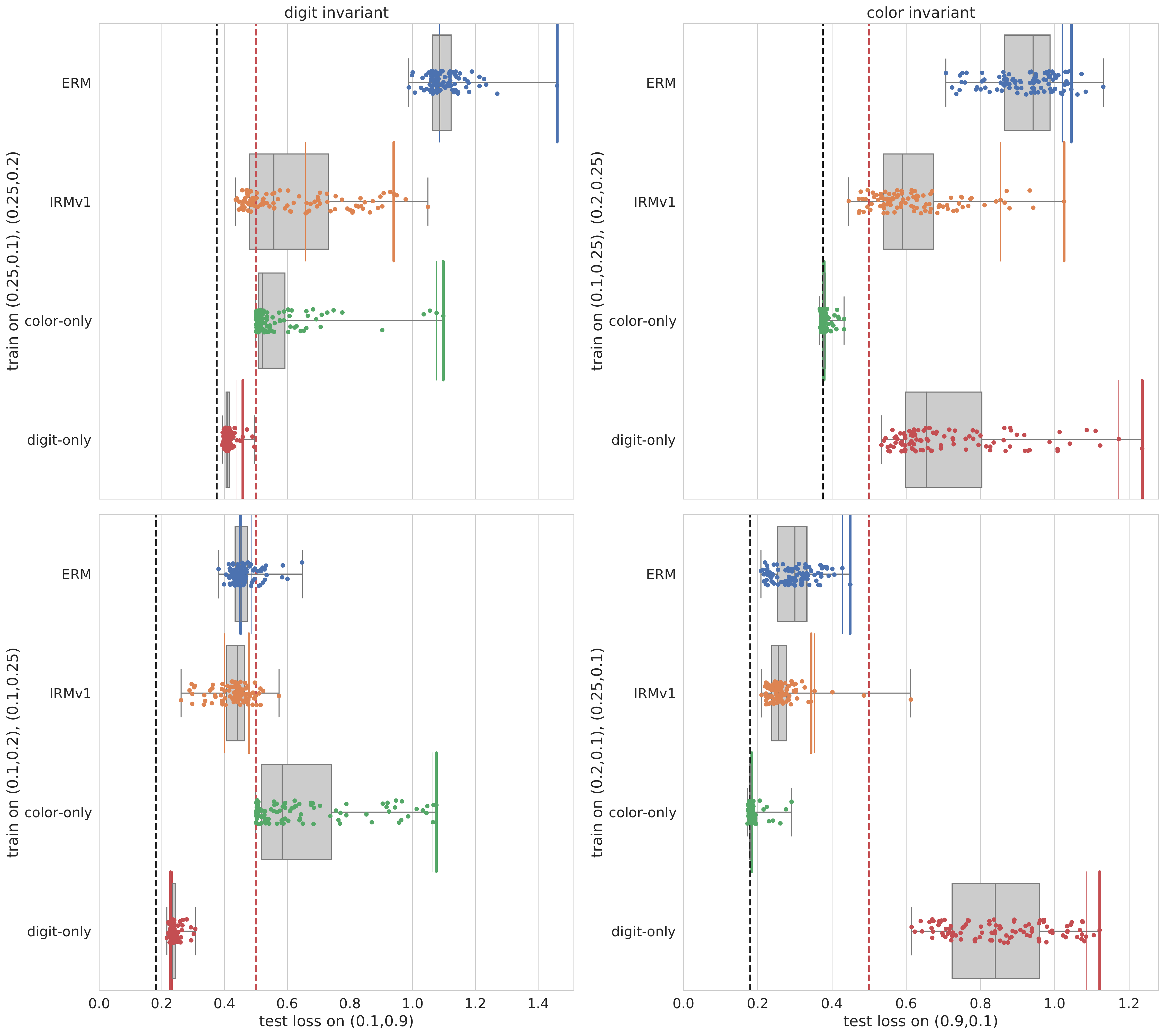} 
    \caption{\ColoredMNIST using $\sqloss$ with the architecture of \citet{arjovsky19invariant}: the same as \cref{fig:square:normal}, but additionally showing cases where color is invariant rather than the digit (right column), and performance of networks which receive only grayscale digits as input, or only a one-hot indicator of the color. Thin colored lines show performance of the second-best hyperparameter setting on the training environments.}
    \label{fig:square:normal-full}
\end{figure}

\begin{figure}
    \centering
    \includegraphics[width=\textwidth]{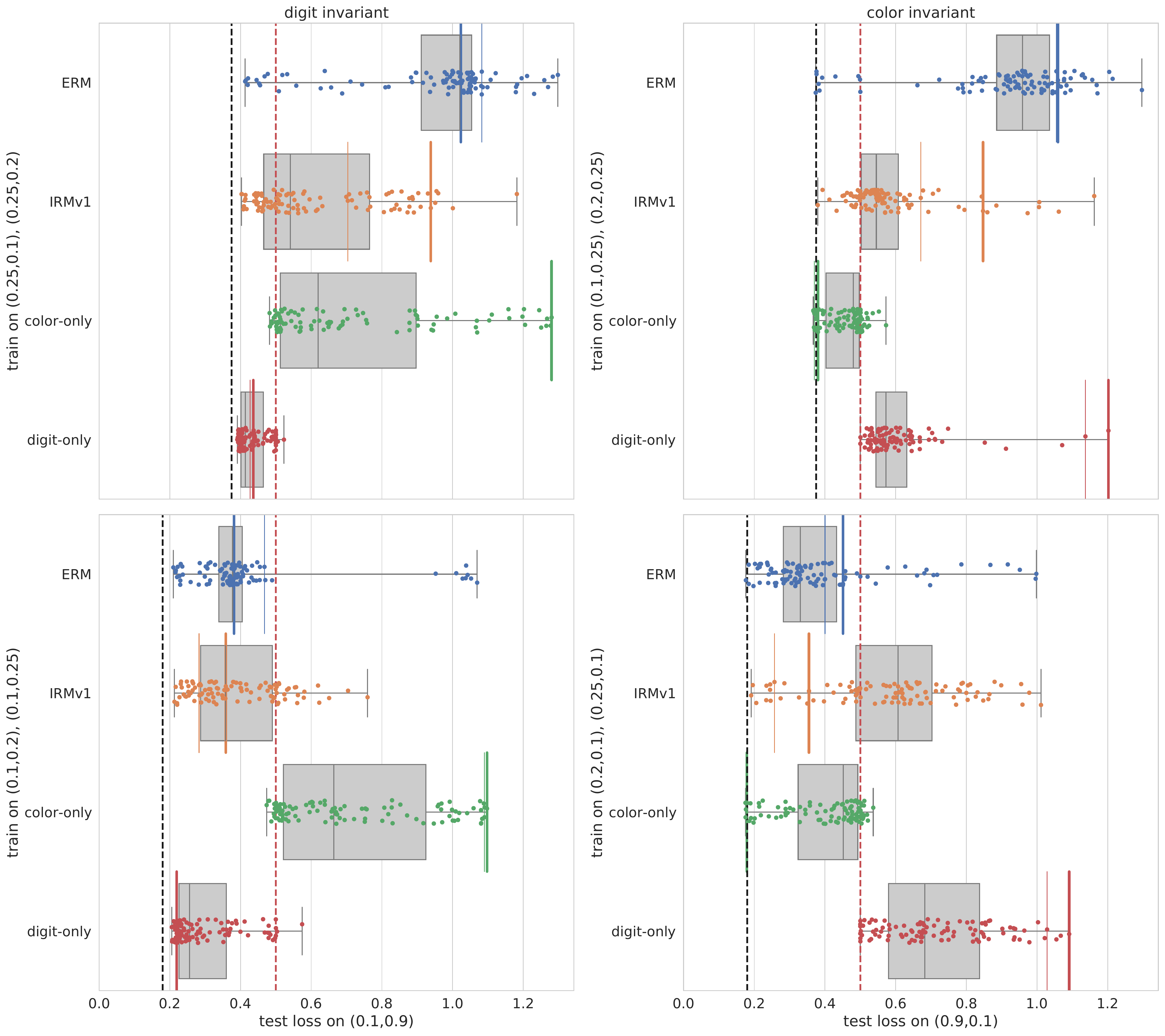} 
    \caption{\ColoredMNIST using $\sqloss$, with a ``split'' architecture.}
    \label{fig:square:split-full}
\end{figure}

\begin{figure}
    \centering
    \includegraphics[width=\textwidth]{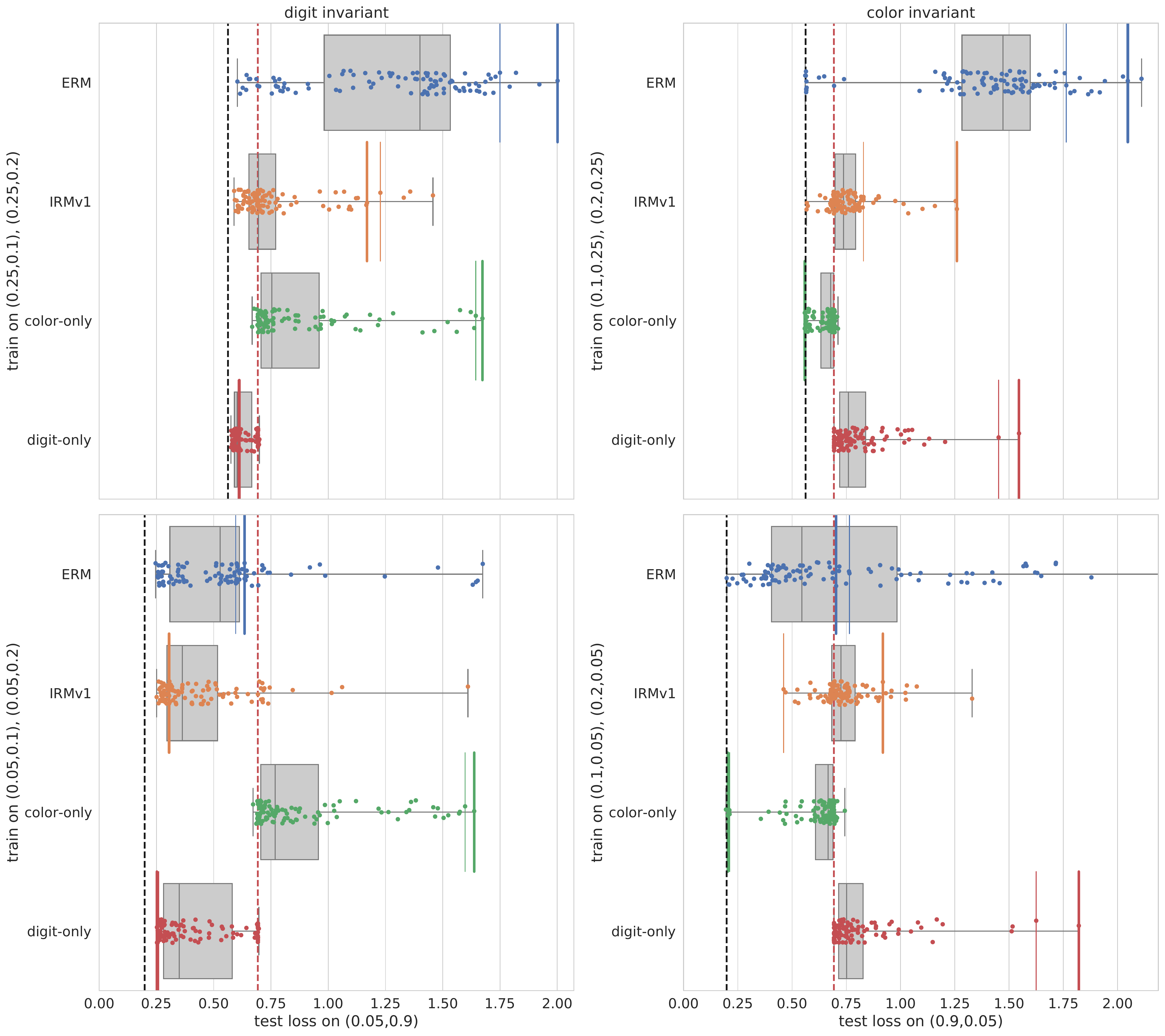}  
    \caption{\ColoredMNIST using $\logloss$, with a ``split'' architecture.}
    \label{fig:nll:split-full}
\end{figure}

\begin{figure}
    \centering
    \includegraphics[width=\textwidth]{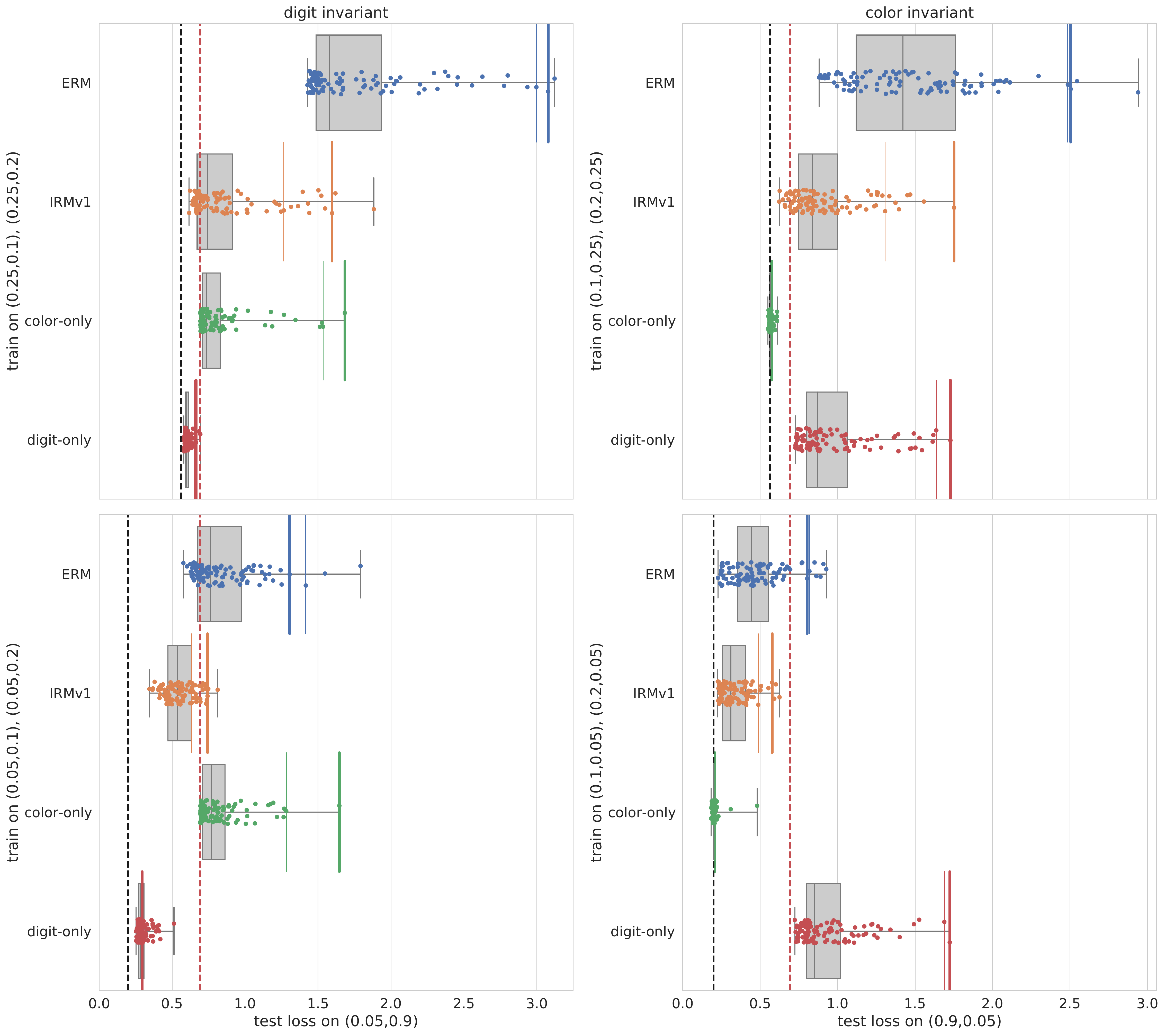} 
    \caption{\ColoredMNIST using $\logloss$, with the architecture of \citet{arjovsky19invariant}.}
    \label{fig:nll:normal-full}
\end{figure}

\clearpage
\section{More details on failure of \texorpdfstring{$\IRM$}{IRM} (\texorpdfstring{\cref{sec:invariance-sufficient}}{Section~\ref{sec:invariance-sufficient}})}
\label{appendix:inv-suff-details}

We first prove \cref{prop:inv-insuff}, restated below for convenience.

\propinvinsuff*
\begin{proof}
Since the parameterization of the environments is analytic and $\Theta = (-1/6, 1/3)$ is a connected open set, we get from part (i) of \autoref{prop:analytic-invariance} that for almost all $\Etr\subseteq \calE$ with $|\Etr| \ge 2$, it holds that $\calI(\Etr) = \calI(\calE)$. We now establish the second part: any $\pred\in \calI(\calE)$ depends on at most one of $x_1$ or $x_2$.

Similar to \cref{tab:subsets}, we can compute $\Ex_{\calD_e}[Y \mid (X_1, X_2) \in S]$ for all possible non-empty subsets $S \subseteq \set{-1,0,1}^2$ and track which of these depend or do not depend on $\theta_e$.
Since it is cumbersome to enumerate manually over all the $511$ ($=2^9-1$) possible non-empty subsets of $\set{-1,0,1}^2$, we enumerate this symbolically, using the SymPy package in Python, to identify all the subsets where $\Ex_{\calD_e}[Y \mid (X_1, X_2) \in S]$ does not depend on $\theta_e$; note that $\Ex_{\calD_e}[Y \mid (X_1, X_2) \in S]$ is a rational function in $\theta_e$ and hence if it is not identically zero, then it is in fact different for {\em almost all} pairs of choices for $\theta_e$. (Code is in the supplementary material; \texttt{two-bit/pure-irm-fail-example.py}.)

There turn out to be $37$ non-empty subsets $S$ for which $\Ex_{\calD_e}[Y \mid (X_1, X_2) \in S]$ does not depend on $\theta_e$; out of which $\Ex_{\calD_e}[Y \mid (X_1, X_2) \in S]$ is non-zero for only $6$ choices of $S$ as given in \cref{tab:subsets-inv-insuff}.

\begin{table}[h]
\begin{center}{\renewcommand{\arraystretch}{1.5}
\scriptsize
\begin{tabular}{|ccccccccc|c|c|}
\hline
\multicolumn{9}{|c|}{Subset $S \subseteq \set{-1,0,+1}^2$} & $\Ex_{\calD_{e}}[Y | X \in S]$ & Characterization of $S$ \\
\hline
&  & (+1,-1) & & & (+1,0) & & & (+1,+1) & $\phantom{+}0.3$ & {$X_1=+1$}\\
(-1,-1) & & & (-1,0) & & & (-1,+1) & & & $-0.3$ & {$X_1=-1$}\\
& & & & & & (-1,+1) & (0,+1) & (+1,+1) & $\phantom{+}0.3$ & {$X_2=+1$}\\
(-1,-1) & (0,-1) & (+1,-1) & & & & & & & $-0.3$ & {$X_2=-1$}\\
(-1,-1) & (0,-1) & & (-1,0) & (0,0) & & (-1,+1) & (0,+1) & & $-0.15$ & {$X_1\in\set{-1,0}$}\\
& (0,-1) & (+1,-1) & & (0,0) & (+1,0) & & (0,+1) & (+1,+1) & $\phantom{+}0.15$ & {$X_1\in\set{0,+1}$}\\
\hline
\end{tabular}}
\end{center}
\caption{Conditional expectations for different choices of $\varphi$ in the proof of of \cref{prop:inv-insuff}.}
\label{tab:subsets-inv-insuff}
\end{table}

For any predictor $w \circ \varphi \in \calI(\Etr)$ and any $z$ satisfying $w(z) \ne 0$, it must be the case that $\varphi^{-1}(z)$ is among the ones in \cref{tab:subsets-inv-insuff}. Thus, it is easy to see that the only predictors in $\calI(\Etr)$ are those that depend only on $x_1$, or depend only on $x_2$, or neither (for the identically zero predictor $\pred_0$). Clearly, all these predictors are also in $\calI(\calE)$ and thus, we get $\calI(\Etr) = \calI(\calE)$.

Moreover, for any environment $e \in \calE$, it holds in the case of $\ell = \sqloss$ that among all the predictors that depend only on $x_1$, the one with the lowest loss $\calL_e(\cdot)$ is $\pred_1(x) = 0.3 x_1$ and similarly, among all the predictors that depend only on $x_2$, the one with the lowest loss $\calL_e$ is $\pred_2(x) = 0.3 x_2$. (Similar, argument holds for $\ell = \logloss$.)
Thus, $\IRM$ will always pick one among $\pred_1$ and $\pred_2$.
\end{proof}

Finally, we visualize the loss of the predictors $\pred_1(x) := 0.3 x_1$, $\pred_2(x) = 0.3x_2$ and the zero predictor $\pred_0(x) = 0$ over all choices of $\theta_e \in (-1/6,1/3)$ in \cref{fig:inv-insuff-losses}. It is easy to see from the figure that if $\Etr$ only contains environments $e$ corresponding to $\theta_e < 0$, we will have that $\calL_e(\pred_2) < \calL_e(\pred_1)$ for all $e \in \Etr$, and yet the invariant predictor that minimizes $\sup_{e \in \calE} \calL_{e}(\cdot)$ is $\pred_1$ and in fact $\sup_{e\in\calE} \calL_e(\pred_2) = \sup_{e\in\calE} \calL_e(\pred_0) = 0.5$, that is, worst-case over all environments, $\pred_2$ is no better than the identically zero predictor.

\begin{figure}
\centering
\begin{lpic}[t(4mm),r(5mm)]{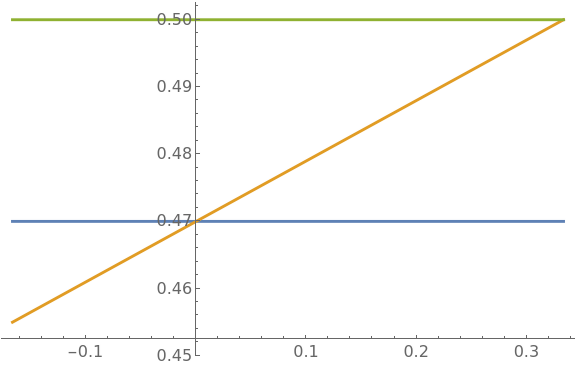(0.6)}
\normalsize
\lbl[c]{50,98;\textcolor{black!60}{\boldmath $\calL_e(\pred)$}}
\lbl[c]{153,6;\textcolor{black!60}{\boldmath $\theta_e$}}
\lbl[c]{27,43;\textcolor{WolframOne}{\boldmath$\pred_1$}}
\lbl[c]{27,17;\textcolor{WolframTwo}{\boldmath$\pred_2$}}
\lbl[c]{27,83;\textcolor{WolframThree}{\boldmath$\pred_0$}}
\end{lpic}
\caption{Losses $\calL_e$ (for $\ell = \sqloss$) of predictors $\pred_1$, $\pred_2$ and the zero predictor $\pred_0$ in $\calI(\calE)$ for $\theta_e \in (-1/6, 1/3)$.}
\label{fig:inv-insuff-losses}
\end{figure}

\section{More details on Generalization of Invariance (from \texorpdfstring{\cref{sec:invariance-generalize}}{Section \ref{sec:invariance-generalize}})} \label{appendix:analytic-invariance}

\propcontinuousinvariance*
\begin{proof}
Consider the two-bit environments of \cref{sec:bit-example},
denoted $(\alpha_e, \beta_e)$.
Define $\Pi$ as the continuous, piecewise-linear map
\[ \Pi(\theta) = \begin{cases} 
	\left( \frac{1}{10}, \frac{6\theta}{5} \right) & 0 < \theta \leq \frac14 \\
	\left( \frac{6\theta - 1}{5}, \frac{3}{10} \right) & \frac14 < \theta < 1
\end{cases}
.\]
Consider $\Thetatr = \set{\theta : 0 < \theta < \frac14}$.
Then the representation $\varphi_1(X) := X_1$ is invariant across $\Etr$,
because $\Ex_{\calD_e}[Y \mid X_1=x_1]$ is invariant across $\Etr$. Thus, in the case of $\sqloss$, the predictor $\pred_1(X) := 0.8 X_1$ is in $\calI(\Etr)$.
However, $\pred_1 \notin \calI(\calE)$,
because $\Ex_{\calD_e}[Y \mid X_1=x_1]$ changes on environments in $\calE \setminus \Etr$ when $\frac14 < \theta < 1$.
\end{proof}

We now prove \cref{prop:analytic-invariance}, restated below for convenience. First, we recall a basic fact about analytic functions.

\begin{fact}[\cite{mityagin2015zero}]\label{fact:analytic-zeros}
	Let $\Theta$ be a connected, open subset of $\bbR^d$. The set of zeros $\set{z \in \Theta \mid g(z) = 0}$
	of an analytic function $g : \Theta \to \bbR$
	has non-zero Lebesgue measure in $\bbR^d$ if and only if $g$ is identically $0$.
\end{fact}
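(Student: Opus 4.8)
I would prove the nontrivial ``only if'' direction; the ``if'' direction is immediate, since if $g \equiv 0$ then its zero set is all of $\Theta$, which has positive Lebesgue measure. So assume $g$ is analytic on the connected open set $\Theta \subseteq \bbR^d$ and that its zero set $Z \coloneqq \set{z \in \Theta : g(z) = 0}$ has positive Lebesgue measure, and aim to conclude $g \equiv 0$. Two standard reductions come first. Since $\Theta$ is a countable union of open boxes (products of open intervals) whose closures lie in $\Theta$, positive measure of $Z$ forces $Z \cap B$ to have positive measure for at least one such box $B = I_1 \times \cdots \times I_d$. Moreover, by the identity theorem for real-analytic functions on a connected open set --- the set of points at which all partial derivatives of $g$ vanish is both open (Taylor expansion) and closed (continuity), hence is all of $\Theta$ as soon as it is nonempty --- it suffices to show $g \equiv 0$ on $B$. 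Thus the claim reduces to: if $g$ is analytic on an open box $B$ and $Z \cap B$ has positive measure, then $g$ vanishes identically on $B$.

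I would prove this reduced statement by induction on $d$. For $d = 1$: $g$ is real-analytic on an interval $I_1$, and $Z \cap I_1$ has positive one-dimensional measure; shrinking to a closed subinterval on which $Z$ still has positive measure (continuity of measure from below), $Z$ restricted there is an infinite compact set and hence has an accumulation point $p$ lying in the open interval $I_1$. If $g$ were not identically zero near $p$, then $g(x) = (x-p)^k h(x)$ with $h$ analytic and $h(p) \ne 0$, so $p$ would be an isolated zero --- a contradiction. Hence all derivatives of $g$ vanish at $p$, so $g \equiv 0$ in a neighborhood of $p$, and then $g \equiv 0$ on $I_1$ by the identity theorem.

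For the inductive step, write $z = (x,t)$ with $x \in \bbR^{d-1}$, $t \in \bbR$, and $B = B' \times I_d$. The set $Z \cap B$ is closed in $B$, hence measurable, so Fubini's theorem applies: the set $A \coloneqq \set{x \in B' : \lambda_1(\set{t \in I_d : g(x,t) = 0}) > 0}$ has positive $(d-1)$-dimensional measure. For each $x \in A$, the slice $t \mapsto g(x,t)$ is real-analytic on $I_d$ with a positive-measure zero set, so by the $d = 1$ case it vanishes identically on $I_d$. Consequently, for every fixed $t_0 \in I_d$, the real-analytic function $x \mapsto g(x, t_0)$ on $B'$ vanishes on the positive-measure set $A$, so by the induction hypothesis it vanishes on all of $B'$. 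Since $t_0 \in I_d$ was arbitrary, $g \equiv 0$ on $B' \times I_d = B$, which completes the induction and, via the identity-theorem reduction, the proof.

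The argument is conceptually routine; the only places needing care are bookkeeping rather than substance. I would make sure the ``positive measure somewhere implies positive measure on a compactly contained box'' and ``accumulation point lies inside the open interval, not merely its closure'' steps are handled cleanly by passing to closed sub-boxes, and that the measurability hypothesis of Fubini is dispatched by noting $Z$ is closed. The one genuinely external ingredient is the identity theorem for real-analytic functions, which I would either cite as standard or justify by the clopen argument indicated above.
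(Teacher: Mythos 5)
The paper does not prove this statement at all: it is imported as an external fact with a citation to Mityagin (2015), whose note contains essentially the argument you give (induction on dimension via Fubini, the one-dimensional accumulation-point argument as the base case, and the clopen identity-theorem reduction to propagate from a box to all of the connected set $\Theta$). Your proof is correct and complete as a proof plan --- the measurability of the zero set, the passage to a compactly contained bounded box, and the location of the accumulation point in the open interval are exactly the bookkeeping points that need care, and you handle them --- so it would serve as a self-contained substitute for the citation.
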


\propanalyticinvariance*
\begin{proof}
\textbf{Part (i).} We have $\calI(\calE_{\Pi}) \subseteq \calI(\Etr)$ by definition. We establish the converse by showing that $F := \set{(\theta_1, \theta_2) \mid \calI(\set{\Pi(\theta_1), \Pi(\theta_2)}) \ne \calI(\calE_{\Pi})}$ has measure zero in $\Theta \times \Theta$.

For any $S \subseteq \calX$ define the analytic functions $n_S$ and $d_S$ as
\begin{gather*}
    n_S(\theta) := \sum_{x \in S} \sum_{y \in \calY} y \cdot \Pi_{x,y}(\theta)
    ,\qquad
    d_S(\theta) := \sum_{x \in S} \sum_{y \in \calY} \Pi_{x,y}(\theta)
    \\
    \text{so that }\,
    \Ex_{\Pi(\theta)}[Y \mid X \in S] = \frac{n_S(\theta)}{d_S(\theta)}
    \,\text{ whenever }
    \Pr_{\Pi(\theta)}[X \in S] = d_S(\theta) \ne 0
.\end{gather*}
We say that $S$ is ``valid'' if either (i) $d_S(\theta) = 0$ for all $\theta \in \Theta$, or (ii) there exists $c_S \in \bbR$ such that $n_S(\theta) = c_S \cdot d_S(\theta)$ for all $\theta \in \Theta$ subject to $d_S(\theta) \ne 0$. Note that $w \circ \varphi \in \calI(\calE_{\Pi})$ if and only if, (i) for all $z \in \mathsf{range}(\varphi)$, it holds that $\varphi^{-1}(z) \subseteq \calX$ is valid, and (ii) $w(z) = \Ex_{\Pi(\theta)}[Y | X \in \varphi^{-1}(z)]$ for any $\theta \in \Theta$ such that $d_S(\theta) \ne 0$ (in the case of $\sqloss$).

For any invalid set $S \subseteq \calX$, define
$F_S$ to consist of all pairs $(\theta_1, \theta_2)$ for which at least one of the following condition holds: (i) $d_S(\theta_1) = 0$, or (ii) $d_S(\theta_2) = 0$ or (iii) $n_S(\theta_1) \cdot d_S(\theta_2) - n_S(\theta_2) \cdot d_S(\theta_1) = 0$. Since $S$ is not valid, it follows from \autoref{fact:analytic-zeros} that $F_S$ has zero Lebesgue measure.

Finally, we show that $\displaystyle F \subseteq \bigcup_{S \subseteq \calX : S \text{ is invalid}} F_S$. For any $(\theta_1, \theta_2) \in F$ and any $w \circ \varphi \in I(\set{\Pi(\theta_1), \Pi(\theta_2)}) \smallsetminus I(E)$, there exists $z \in \mathsf{range}(\varphi)$ such that $S = \varphi^{-1}(z) \subseteq \calX$ is invalid. This implies $(\theta_1, \theta_2) \in F_S$. Since there are only finitely many $S \subseteq \calX$, we get that $F$ also has zero Lebesgue measure, thereby concluding the proof of part (i).

\textbf{Part (ii).} We have $\Iscalar(\calE_{\Pi}) \subseteq \Iscalar(\Etr)$ by definition.
To show the converse, consider any predictor $\pred = 1 \cdot \varphi \in \Iscalar(\Etr)$,
and consider
\begin{align*}
	g(\theta)
	&~:=~ \nabla_{w | w=1}\calL_{\Pi(\theta)} (w \cdot \varphi)
	~\phantom{:}=~ \sum_{x \in \calX} \sum_{y \in \calY} \Pi_{x,y}(\theta) \cdot \nabla_{w | w=1} \ell(w \cdot \varphi(x), y)%
	.\end{align*}
$g(\theta)$ is linear in  $\set{\Pi_{(x,y)}(\theta) \mid (x,y) \in \calX \times \calY}$,
each of which is analytic in $\theta$; thus $g$ is analytic in $\theta$.
Since \eqref{eq:grad-0-scalar} holds for all $e \in \Etr$, $g(\theta)=0$ for all $\theta \in \Thetatr$.
But since $\Thetatr$ has non-zero Lebesgue measure in $\bbR^d$,
by \cref{fact:analytic-zeros} $g$ is identically $0$ on $\Theta$, hence $\pred \in \Iscalar(\calE_{\Pi})$.
\end{proof}

We show how to extend \cref{prop:analytic-invariance} to the case of infinite (measurable) spaces $\calX$ and $\calY \subseteq \bbR$, where $|y| \le B$ for all $y \in \calY$ for some known bound $B$. Similar to before, let $\Delta_{\calX \times \calY}$ be the set of all probability measures over $\calX \times \calY$. For simplicity, we use $\Omega$ to denote $\calX \times \calY$.

\begin{definition}\label{def:analytic-Pi}
For $\Theta \subseteq \bbR^d$ and a measurable space $\Omega$, the parameterization $\Pi : \Theta \to \Delta_{\Omega}$ is said to be {\em analytic} if for every measurable set $S \subseteq \Omega$ and every measurable function $g : \Omega \to \bbR$, the function
\[ \Pi^g_S(\theta) \coloneqq \int_{\omega \in S} g(\omega) \ \mathsf{d}\Pi_{\theta}(\omega) \]
is an analytic function in $\theta$ (where we use $\Pi_{\theta}$ to denote the measure $\Pi(\theta)$ for simplicity).
\end{definition}

We now state the extension of \cref{prop:analytic-invariance} to the case of infinite (measurable) spaces. In the case of $\Iscalar(\calE)$, we will focus on the representations $\varphi : \calX \to \bbR$ where $|\varphi(x)| \le B$ for all $x\in \calX$. From the point of view of $\IRM_{\Wscalar}$, this is without loss of generality because we know that $|y| \le B$ for all $y\in \calY$.

\cref{prop:analytic-invariance-infinite}, however, requires a far stronger condition for $\calI(\Etr) = \calI(\calE_{\Pi})$: $\Thetatr$ needs non-zero Lebesgue measure, rather than simply almost all sets of at least two environments as in \cref{prop:analytic-invariance}. The key step in the proof of \cref{prop:analytic-invariance} that allowed for this stronger statement was that the number of subsets $S \subseteq \calX$ is finite. We do not know if \cref{prop:analytic-invariance-infinite} can be strengthened to hold for finite $\Thetatr$; if not, it will be interesting to determine other conditions under which we can get generalization of invariance for finite $\Thetatr$.

\begin{proposition}\label{prop:analytic-invariance-infinite}
\improve{Can we do this for the new, stronger \cref{prop:analytic-invariance}?}%
Let $\Thetatr \subseteq \Theta \subseteq \bbR^d$, where $\Theta$ is a connected, open set and $\Thetatr$ has non-zero Lebesgue measure, in $\bbR^d$. Suppose $\Pi : \Theta \to \Delta_{\calX \times \calY}$ is analytic (as in \cref{def:analytic-Pi}),
and $\Etr = \Pi(\Thetatr)$. Then for the $\sqloss$ loss,
\[ \text{(i) } \calI(\Etr) = \calI(\calE_{\Pi})
\quad \text{ and } \quad
\text{(ii) } \Iscalar(\Etr) = \Iscalar(\calE_{\Pi}) .\]
\end{proposition}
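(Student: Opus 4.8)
The plan is to follow the proof of \cref{prop:analytic-invariance} almost verbatim, replacing finite sums over $\calX\times\calY$ by integrals against $\Pi_\theta$, and the elementary remark ``a polynomial in analytic functions is analytic'' by the analyticity hypothesis of \cref{def:analytic-Pi}. The inclusions $\calI(\calE_\Pi)\subseteq\calI(\Etr)$ and $\Iscalar(\calE_\Pi)\subseteq\Iscalar(\Etr)$ hold by definition, so I would only argue the reverse inclusions. Throughout I would restrict attention to representations and predictors bounded in absolute value by $B$, exactly as is already stipulated for $\Iscalar$ and as is without loss of generality from the $\IRM$ viewpoint since $|Y|\le B$; this keeps every integrand below bounded, hence integrable against every $\Pi_\theta$ and covered by \cref{def:analytic-Pi}.

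For part (ii), given $\pred = 1\cdot\varphi\in\Iscalar(\Etr)$ I would set $g(\theta) := \nabla_{w\mid w=1}\calL_{\Pi(\theta)}(w\cdot\varphi) = \int_{\calX\times\calY}\bigl(\varphi(x)^2 - y\,\varphi(x)\bigr)\,\mathsf{d}\Pi_\theta(x,y)$. Its integrand is a bounded measurable function on $\Omega=\calX\times\calY$, so \cref{def:analytic-Pi} (with $S=\Omega$) makes $g$ analytic on $\Theta$; by \eqref{eq:grad-0-scalar} it vanishes on $\Thetatr$, which has positive Lebesgue measure, so \cref{fact:analytic-zeros} forces $g\equiv 0$ on the connected open set $\Theta$, i.e.\ \eqref{eq:grad-0-scalar} holds for every $e\in\calE_\Pi$ and $\pred\in\Iscalar(\calE_\Pi)$. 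This is the original part (ii) with an integral in place of a sum.

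For part (i), given $\pred = w\circ\varphi\in\calI(\Etr)$, I would use that for $\sqloss$ the statement ``$w\circ\varphi$ minimizes $\calL_{\Pi(\theta)}(\overline w\circ\varphi)$ over all $\overline w$'' is equivalent to $w\circ\varphi$ being a version of $\Ex_{\Pi_\theta}[Y\mid\varphi(X)]$, equivalently to orthogonality of the residual to the $\sigma$-algebra generated by $\varphi$: $\int\bigl(y-w(\varphi(x))\bigr)\indicator_A(x)\,\mathsf{d}\Pi_\theta = 0$ for every $A=\varphi^{-1}(B')$ with $B'\subseteq\calZ$ measurable. So, for each such $A$, I would define $\psi_A(\theta) := \int_{\calX\times\calY}\bigl(y-w(\varphi(x))\bigr)\indicator_A(x)\,\mathsf{d}\Pi_\theta(x,y)$; its integrand is bounded and measurable (measurability of $x\mapsto w(\varphi(x))$ uses only the standing assumption that $w,\varphi$ are measurable), so $\psi_A$ is analytic by \cref{def:analytic-Pi}, vanishes on $\Thetatr$ because $w\circ\varphi$ is optimal over $\Etr=\Pi(\Thetatr)$, and therefore vanishes on all of $\Theta$ by \cref{fact:analytic-zeros}. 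Since this holds simultaneously for every such $A$, in every $\theta\in\Theta$ the residual is orthogonal to $\sigma(\varphi)$, i.e.\ $w\circ\varphi$ is $\calL_{\Pi(\theta)}$-optimal among all $\overline w\circ\varphi$, so $\pred\in\calI(\calE_\Pi)$.

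The main obstacle — and the reason this needs $\Thetatr$ of positive measure, unlike the two generic environments that suffice in \cref{prop:analytic-invariance}(i) — is that with $\calX$ infinite one can no longer reduce $\varphi$ to a finite partition of $\calX$ and dispose of each ``invalid'' block by deleting a measure-zero set of environment pairs. Instead the orthogonality condition must be checked against every $A$ at once, applying \cref{fact:analytic-zeros} once per $A$; with uncountably many such $A$ one cannot afford to excise a null set for each, so one needs the single hypothesis that $\Thetatr$ is non-null, under which vanishing on $\Thetatr$ propagates to all of $\Theta$ for each fixed $A$. The only other thing to be careful about is keeping the integrands fed into \cref{def:analytic-Pi} bounded and measurable, which is exactly what the $B$-boundedness convention together with $|Y|\le B$ buys.
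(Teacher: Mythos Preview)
Your proposal is correct. Part (ii) coincides with the paper's argument essentially verbatim.

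For part (i) you take a slightly different route. The paper works pointwise in $z$: for each $z$ in the range of $\varphi$ it forms the analytic numerator and denominator
\[
n_z(\theta)=\int_{\varphi^{-1}(z)\times\calY} y\,\mathsf d\Pi_\theta,\qquad
d_z(\theta)=\int_{\varphi^{-1}(z)\times\calY} \mathsf d\Pi_\theta,
\]
uses \cref{obs:expectations} to get a constant $\alpha=w(z)$ with $n_z(\theta)=\alpha\,d_z(\theta)$ on $\Thetatr$, and then applies \cref{fact:analytic-zeros} to the analytic function $h_z:=n_z-\alpha\,d_z$. You instead use the $L^2$-projection characterization of the square-loss minimizer and test the residual against every $\sigma(\varphi)$-measurable indicator, applying \cref{fact:analytic-zeros} to $\psi_A(\theta)=\int (y-w(\varphi(x)))\indicator_A\,\mathsf d\Pi_\theta$ for each $A\in\sigma(\varphi)$. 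The two arguments are equivalent in spirit, but yours is the more robust formulation in the general measurable setting: it sidesteps any worry about conditioning on the possibly $\Pi_\theta$-null event $\{\varphi(X)=z\}$, and the single bounded integrand $(y-w(\varphi(x)))\indicator_A$ plugs directly into \cref{def:analytic-Pi} without needing to handle a ratio. Your closing paragraph also correctly isolates why the finite-$\calX$ ``two generic environments'' trick is unavailable here and why the positive-measure hypothesis on $\Thetatr$ is the right substitute.
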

\begin{proof} The proof is similar to that of \cref{prop:analytic-invariance}.

\textbf{Part (i).} We have $\calI(\calE_{\Pi}) \subseteq \calI(\Etr)$ by definition. To show the converse, consider any $\pred = w \circ \varphi \in \calI(\Etr)$, with $\varphi$ invariant over $\Etr$.
For any $z$ in the range of $\varphi$, consider the function
\[
g_z(\theta)
~=~ \Ex_{\Pi(\theta)}[Y \mid \varphi(X) = z]
~=~ \frac{%
	\int_{\varphi^{-1}(z) \times \calY}\ y \, \mathsf{d}\Pi_{\theta}(x,y)%
}{%
	\int_{\varphi^{-1}(z) \times \calY} \ \mathsf{d}\Pi_{\theta}(x,y)}
.\]

Let $n_z(\theta)$ and $d_z(\theta)$ denote the numerator and denominator of $g_z(\theta)$, respectively, both of which are analytic in $\theta$ by \cref{def:analytic-Pi} (and boundedness of $y$). By \cref{obs:expectations}, there exists a constant $\alpha$ such that for all $\theta \in \Thetatr$ that satisfy $d_z(\theta) \ne 0$, it holds that
\[ 
g_z(\theta) =  \frac{n_z(\theta)}{d_z(\theta)} = \alpha \implies h_z(\theta):= n_z(\theta) - \alpha \cdot d_z(\theta) = 0 
.\]

Moreover, $d_z(\theta) = 0$ implies $n_z(\theta) = 0$, hence $h_z(\theta)=0$ for all $\theta \in \Thetatr$. Since $\Thetatr$ has non-zero Lebesgue measure, it follows from \cref{fact:analytic-zeros} that $h_z(\theta)$ that is identically zero on $\Theta$. This implies for all $\theta \in \Theta$ such that $d_z(\theta) \neq 0$, $g_z(\theta) = \alpha$. Hence, by \cref{obs:expectations}, we get that $\pred \in \calI(\calE_{\Pi})$.

\textbf{Part (ii).} This follows similarly. We have $\Iscalar(\calE_{\Pi}) \subseteq \Iscalar(\Etr)$ by definition. To show the converse, consider any predictor $\pred = 1 \cdot \varphi \in \Iscalar(\Etr)$,
and consider the following function of $\theta$:
\[g(\theta) ~\coloneqq~ \nabla_{w|w=1} \calL_{\Pi(\theta)}(w\cdot \varphi) ~=~ \int_{\calX \times \calY} \nabla_{w|w=1} \ell(w\cdot \varphi(x),y) \, \mathsf{d}\Pi_\theta(x,y)\, , \]
which by \cref{def:analytic-Pi} is an analytic function in $\theta$. To derive this, we swapped the $\nabla_{w|w=1}$ with $\int_{\calX \times \calY}$, possible because $|y|$ and $|\varphi(x)|$ are uniformly bounded \citep[Theorem 2]{planetmathdiffint}.

Since \eqref{eq:grad-0-scalar} holds for all $e \in \Etr$, $g(\theta)=0$ for all $\theta \in \Thetatr$.
But since $\Thetatr$ has non-zero Lebesgue measure in $\bbR^d$,
we have from \cref{fact:analytic-zeros} that $g$ is identically $0$ on $\Theta$, hence $\pred \in \Iscalar(\calE_{\Pi})$.
\end{proof}

\end{document}